\title{Collaborative Learning with Different Labeling Functions}
\author{
Yuyang Deng\thanks{Pennsylvania State University. Email: \texttt{yzd82@psu.edu}.}
\and
Mingda Qiao\thanks{University of California, Berkeley. Email: \texttt{mingda.qiao@berkeley.edu}. Part of this work was done while the author was a graduate student at Stanford University.}
}
\date{}
\newtheorem{theorem}{Theorem}
\newtheorem{lemma}{Lemma}
\newtheorem{definition}[lemma]{Definition}
\newtheorem{remark}[lemma]{Remark}
\newtheorem{claim}{Claim}
\long\def\comment#1{}
\newcommand{\1}{\ensuremath{{\sf (i)}}}
\newcommand{\R}{\mathbb{R}}
\newcommand{\N}{\mathbb{N}}
\newcommand{\cA}{\mathcal{A}}
\newcommand{\cD}{\mathcal{D}}  
\newcommand{\cX}{\mathcal{X}}  
\DeclareMathOperator*{\argmin}{arg\,min}
\newtheorem{example}[theorem]{Example}
\newtheorem{problem}{Problem}
\renewcommand{\1}[1]{\mathbbm{1}\left\{#1\right\}} 
\newcommand{\A}{\mathcal{A}}
\newcommand{\Bern}{\mathsf{Bernoulli}}
\newcommand{\Bin}{\mathsf{Binomial}} 
\newcommand{\Ex}[2]{\operatorname*{\mathbb{E}}_{#1}\left[#2\right]}
\newcommand{\D}{\mathcal{D}} 
\newcommand{\Dact}{\D^{\mathsf{actual}}}
\newcommand{\Dhard}{\D^{\mathsf{hard}}}
\newcommand{\eps}{\epsilon}
\newcommand{\err}{\mathrm{err}}
\newcommand{\F}{\mathcal{F}} 
\newcommand{\goodevent}{\mathcal{E}^{\mathsf{good}}}
\newcommand{\NP}{\mathsf{NP}}
\newcommand{\poly}{\mathrm{poly}}
\newcommand{\polylog}{\mathrm{polylog}}
\newcommand{\pr}[2]{\Pr_{#1}\left[#2\right]}
\newcommand{\RP}{\mathsf{RP}}
\newcommand{\X}{\mathcal{X}} 
\newcommand{\alglinelabel}{%
  \addtocounter{ALC@line}{-1}
  \refstepcounter{ALC@line}
  \label
}
\begin{document}

\maketitle

\begin{abstract}
    We study a variant of Collaborative PAC Learning, in which we aim to learn an accurate classifier for each of the $n$ data distributions, while minimizing the number of samples drawn from them in total. Unlike in the usual collaborative learning setup, it is not assumed that there exists a single classifier that is simultaneously accurate for all distributions.

    We show that, when the data distributions satisfy a weaker realizability assumption, which appeared in~\cite{CM12} in the context of multi-task learning, sample-efficient learning is still feasible. We give a learning algorithm based on Empirical Risk Minimization (ERM) on a natural augmentation of the hypothesis class, and the analysis relies on an upper bound on the VC dimension of this augmented class.

    In terms of the computational efficiency, we show that ERM on the augmented hypothesis class is $\NP$-hard, which gives evidence against the existence of computationally efficient learners in general. On the positive side, for two special cases, we give learners that are both sample- and computationally-efficient.
\end{abstract}

\section{Introduction}

In recent years, the remarkable success of data-driven machine learning has transformed numerous domains using the vast and diverse datasets collected from the real world. An ever-increasing volume of decentralized data is generated on a multitude of distributed devices, such as smartphones and personal computers. To better utilize these distributed data shards, we are faced with a challenge: how to effectively learn from these heterogeneous and noisy data sources?

{\em Collaborative PAC Learning}~\cite{BHPQ17} is a theoretical framework that abstracts the challenge above. In this model, there are $n$ data distributions $\D_1, \D_2, \ldots, \D_n$, from which we can adaptively sample. We are asked to learn $n$ classifiers $\hat{f}_1, \hat f_2, \ldots, \hat{f}_n$, such that each $\hat f_i$ has an error at most $\eps$ on $\D_i$. The goal is to minimize the number of labeled examples that we sample from the $n$ distributions in total.

Note that if we ignore the potential connection among the $n$ learning tasks and solve them separately, the sample complexity is necessarily linear in $n$. Previously, \cite{BHPQ17} introduced a sample-efficient algorithm when all distributions admit the same labeling function, i.e., some classifier in the hypothesis class has a zero error on every $\D_i$. Their algorithm has an $O((d + n)\log n)$ sample complexity, where $d$ is the VC dimension of hypothesis class.\footnote{For brevity, we treat the accuracy and confidence parameters as constants here.} When $d$ is large, the \emph{overhead} of the sample complexity is significantly reduced from $n$ to $\log n$.

However, in the real world, it is often too strong an assumption that every data distribution is consistent with the \emph{same} ground truth classifier. This is especially true when we are learning for a diverse population consisting of multiple sub-groups, each with different demographics and preferences. In light of this, we study a model of {\em collaborative learning with different labeling functions}. In particular, we aim to determine the conditions under which sample-efficient learning is viable when the data from different sources are labeled differently, and find the optimal sample complexity.

The contribution of this work is summarized as follows; see Section~\ref{sec:our-results} for formal statements of our results.
\begin{itemize}
    \item We formalize a model of collaborative learning with different labeling functions, and a sufficient condition, termed \emph{$(k,\epsilon)$-realizability}, for sample-efficient collaborative learning. This realizability assumption was used by~\cite{CM12} in the context of multi-task learning. Under this assumption, we give a learning algorithm with sample complexity $O(kd\log(n/k) + n\log n)$. This algorithm is based on Empirical Risk Minimizarion (ERM) over an augmentation of the hypothesis class.

    \item We show that the ERM problem over the augmented hypothesis class is always $\NP$-hard when $k \ge 3$, and $\NP$-hard for a specific hypothesis class when $k = 2$. This rules out efficient learners based on ERM, as well as \emph{strongly proper} learners that always output at most $k$ different classifiers in the hypothesis class.

    \item Finally, we identify two cases in which computationally efficient learning \emph{is} possible. When all distributions share the same marginal distribution on $\cX$, we give a simple polynomial-time algorithm that matches the information-theoretic bound. When the hypothesis class satisfies a ``$2$-refutability'' assumption, we give a different algorithm based on approximate graph coloring, which outperforms the na\"ive approach with an $\Omega(nd)$ sample complexity.
\end{itemize}

\subsection{Problem Setup}
We adopt the following standard model of binary classification: The hypothesis class $\F \subseteq \{0, 1\}^{\X}$ is a family of binary functions over the instance space $\X$. A data distribution $\D$ is a distribution over $\X \times \{0, 1\}$. The \emph{population error} of a function $f: \X \to \{0, 1\}$ on data distribution $\D$ is defined as
\[
    \err_{\D}(f)
\coloneqq
    \pr{(x, y) \sim \D}{f(x) \ne y}.
\]
A dataset is a multiset with elements in $\X \times \{0, 1\}$. The \emph{training error} of $f:\X \to \{0, 1\}$ on dataset $S = \{(x_i, y_i)\}_{i \in [m]}$ is defined as
\[
    \err_S(f) \coloneqq \frac{1}{m}\sum_{i=1}^{m}\1{f(x_i) \ne y_i}.
\]

The learning algorithm is given sample access to $n$ data distributions $\D_1, \D_2, \ldots, \D_n$. At each step, the algorithm is allowed to choose one of the $n$ distributions (possibly depending on the previous samples) and draw a labeled example from it. The algorithm may terminate at any time and return $n$ functions $\hat f_1, \hat f_2, \ldots, \hat f_n$. The learning algorithm is $(\eps, \delta)$-PAC if it satisfies
\[
    \pr{}{\err_{\D_i}(\hat f_i) \le \eps,~\forall i \in [n]} \ge 1 - \delta.
\]
The sample complexity of the algorithm is the expected number of labeled examples sampled in total.

Note that the above is almost the same as the personalized setup (i.e., the algorithm may output different classifiers for different distributions) of the model of~\cite{BHPQ17}, except that in their model, in addition, it is assumed that there exists a classifier in $\F$ with a zero error on every $\D_i$.

\subsection{Our Results}\label{sec:our-results}
\paragraph{A sufficient condition for sample-efficient learning.} We start by stating a sufficient condition for $n$ distributions to be learnable with a sample complexity that is (almost) linear in some parameter $k$ instead of in $n$.

\begin{definition}[$(k, \eps)$-Realizability]\label{def:k-eps-realizable}
    Distributions $\D_1$, $\D_2$, $\ldots$, $\D_n$ are $(k, \eps)$-realizable with respect to hypothesis class $\F$, if there exist $f^*_1, f^*_2, \ldots, f^*_k \in \F$ such that $\min_{j \in [k]}\err_{\D_i}(f^*_j) \le \eps$ holds for every $i \in [n]$.
\end{definition}
In words, $(k, \eps)$-realizability states that we can find $k$ classifiers in $\F$, such that on each of the $n$ distributions, at least one of the classifiers achieves a population error below $\eps$.

Our first result is a general algorithm that efficiently learns under the $(k, \eps)$-realizability assumption.

\begin{theorem}\label{thm:sample-upper-general}
    Suppose that $\D_1, \D_2, \ldots, \D_n$ are $(k, \eps)$-realizable with respect to hypothesis class $\F$. For any $\delta > 0$, there is an $(8\eps, \delta)$-PAC algorithm with sample complexity
    \[
        O\left(\frac{kd\log(n/k)\log(1/\eps)}{\eps} + \frac{n\log k\log(1/\eps) + n\log(n/\delta)}{\eps}\right).
    \]
\end{theorem}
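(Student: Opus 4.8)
The plan is to recast the task as (approximately) realizable learning over an \emph{augmented} class that bundles $k$ hypotheses together with an assignment of distributions to hypotheses, charging the $kd$ part of the cost to the shared tuple and the $n\log k$ part to the assignment. First I would define the augmented class $\F^{(k)}$ over the instance space $\X\times[k]$ by $\F^{(k)}=\{(x,j)\mapsto g_j(x): g_1,\dots,g_k\in\F\}$, i.e.\ $k$ independent slotwise copies of $\F$. The key structural input (the VC bound advertised in the introduction) is that $\mathrm{VC}(\F^{(k)})=O(kd\log k)$, which follows from Sauer--Shelah applied slotwise: the growth function factorizes across the $k$ slots, each factor is $O(m^d)$, so shattering forces $2^m\le (em/d)^{dk}$ and hence $m=O(kd\log k)$. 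An assignment is a map $\sigma:[n]\to[k]$, and a pair $(G,\sigma)$ with $G=(g_1,\dots,g_k)$ predicts $g_{\sigma(i)}(x)$ on a point $x$ drawn from $\D_i$. Crucially I keep $\sigma$ \emph{outside} the VC bound: the class of pairs $(G,\sigma)$ over $\X\times[n]$ has unbounded VC dimension (a single point $x_0$ fed to $n$ distinct distributions can be shattered once $k\ge 2$), so $\sigma$ must instead be handled by a union bound over its $k^n$ values, contributing only $\log(k^n)=n\log k$ to the sample complexity.

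$(k,\eps)$-realizability supplies a pair $(G^\star,\sigma^\star)$ — take $G^\star=(f_1^\star,\dots,f_k^\star)$ and $\sigma^\star(i)=\argmin_j \err_{\D_i}(f_j^\star)$ — with $\err_{\D_i}(f^\star_{\sigma^\star(i)})\le\eps$ for every $i$. The algorithm therefore runs ERM for the per-distribution single-classifier loss $\sum_i \err_{S_i}(g_{\sigma(i)})$ jointly over $(G,\sigma)$. Because each distribution is committed to a single slot, the output $\hat f_i:=\hat g_{\hat\sigma(i)}$ is automatically one concrete classifier; this deliberately avoids a ``best-of-$k$ per point'' (committee) formulation, whose small pointwise error need not contain any individually accurate classifier. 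For the generalization step I would prove a uniform convergence statement holding simultaneously over all $G\in\F^{(k)}$, all $\sigma$, and all $i\in[n]$: for a fixed $\sigma$ the complexity is just that of $\F^{(k)}$, namely $O(kd\log k)$, paid once and shared across distributions, while the outer layers add $\log(k^n/\delta)=n\log k+\log(1/\delta)$ for $\sigma$ and the confidence. Propagating the realizable ($1/\eps$ rather than $1/\eps^2$) rate through this split produces the two terms: the $kd\log(n/k)\log(1/\eps)/\eps$ piece for learning $G$, and the $n(\log k\log(1/\eps)+\log(n/\delta))/\eps$ piece for pinning down $\sigma$, equivalently for selecting, from fresh samples, the best of the $k$ candidate classifiers for each distribution.

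The main obstacle is obtaining the \emph{per-distribution} guarantee at this cost. Plain ERM on the pooled (mixture) sample only controls the average error $\tfrac1n\sum_i \err_{\D_i}(\hat g_{\hat\sigma(i)})$, and upgrading ``average $\le\eps$'' to ``every $i$ is $\le 8\eps$'' without incurring the prohibitive $\Omega(nd)$ of independent per-distribution uniform convergence is precisely the content of collaborative learning. I expect to wrap the ERM in a multiplicative-weights/reweighting loop over the $n$ distributions, in the spirit of~\cite{BHPQ17}, repeatedly up-weighting the not-yet-accurate distributions; this is what generates the $\log(n/k)$ overhead on the shared $kd$ term and the $n\log(n/\delta)$ verification term. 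Keeping the slot-assignment bookkeeping consistent across reweighting rounds — so that every distribution is ultimately matched to a single accurate hypothesis rather than merely covered on average — is the most delicate part of the argument. Finally, the factor $8$ in the $(8\eps,\delta)$ guarantee is the accumulated constant-factor slack from the successive reductions: realizability to empirical optimum, pooled to per-distribution, and the concluding per-distribution selection among the $k$ candidates.
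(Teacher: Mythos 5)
Your proposal follows essentially the same route as the paper: augment the class by bundling a $k$-tuple of hypotheses with an assignment of distributions to slots, bound the capacity of this augmented object by a $kd$ term plus an $n\log k$ term, and wrap the resulting ERM in the iterative boosting scheme of~\cite{BHPQ17} (sample from the mixture of still-active distributions, test each one individually, retire the ones that pass) to upgrade the average guarantee to a per-distribution one. Whether the assignment $\sigma$ lives inside the hypothesis class (the paper's $\F_{n,k}$, whose VC dimension is bounded in Lemma~\ref{lemma:vc-dim-bound} by summing growth functions over the $k^n$ assignments) or is handled by an explicit union bound over $k^n$ values, as you propose, is a cosmetic difference: both yield the same $n\log k$ contribution. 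Your worry about keeping the slot assignment consistent across rounds is a non-issue; each round re-runs ERM from scratch on the remaining active distributions, and a distribution is retired only once it has been individually certified.

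Two quantitative points need fixing. First, your bound $\mathrm{VC}(\F^{(k)})=O(kd\log k)$ is too loose to prove the stated theorem: multiplied by the $O(\log(n/k))$ rounds it would give a leading term of $kd\log k\log(n/k)\log(1/\eps)/\eps$, with a spurious $\log k$. The fix is the one the paper uses: with $\sum_j m_j=m$, bound $\prod_j\Phi(m_j)\le\Phi(m/k)^k\le(em/(kd))^{kd}$ by concavity and monotonicity of $\log\overline{\Phi}$, which gives $\mathrm{VC}(\F^{(k)})=O(kd)$. Second, your claim that the class of pairs $(G,\sigma)$ over $[n]\times\X$ has \emph{unbounded} VC dimension is wrong; your shattering example only shows it is at least $n$ once $k\ge 2$, and Lemma~\ref{lemma:vc-dim-bound} shows it is at most $O(kd+n\log k)$. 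This does not break your argument (the union bound over $\sigma$ lands in the same place), but the $\Omega(n)$ lower bound is harmless anyway because the $|G^{(r)}|\log k$ term decays geometrically across rounds and sums to $O(n\log k)$.
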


Viewing $\eps$ and $\delta$ as constants, the sample complexity reduces to $O(kd\log(n/k) + n\log n)$. When $d$ is large, the overhead in the sample complexity is $k\log(n/k)$, which interpolates between the $O(\log n)$ overhead at $k = 1$ (shown by~\cite{BHPQ17}) and the $O(n)$ overhead at $k = n$ (where the $n$ learning tasks are essentially unrelated, and a linear overhead is unavoidable).

The factor $8$ in the PAC guarantee can be replaced by any fixed constant that is strictly greater than $1$, at the cost of a different hidden constant in the sample complexity. This follows from straightforward modifications to our proof.

While we prove Theorem~\ref{thm:sample-upper-general} (as well as the other positive results in the paper) under the assumption that the learning algorithm is given the value of $k$, this assumption can be removed via a standard doubling trick: We consider a sequence of guesses on the value of $k$: $1 = k_1 < k_2 < k_3 < \cdots$, where each $k_{i+1}$ is the smallest value of $k$ such that the sample complexity bound is at least twice the bound for $k_i$. Then, we run the learning algorithm with $k$ set to $k_1, k_2 - 1, k_2, k_3 - 1, k_3, \ldots$ in order, and terminate the algorithm as soon as we are convinced that the actual $k$ is larger than the current guess. This procedure succeeds as soon as the guess exceeds the actual value of $k$, and the sample complexity only increases by a constant factor.

We prove Theorem~\ref{thm:sample-upper-general} using the following natural augmentation of the instance space and the hypothesis class.

\begin{definition}[$(G, k)$-Augmentation]\label{def:augmented-class}
Let $\F$ be a hypothesis class over instance space $\X$. For finite set $G$ and $k \in [|G|]$, the $(G, k)$-augmentation of $\F$ is the hypothesis class $\F_{G,k}$ over $\X' \coloneqq G \times \X$ defined as:
\[
    \F_{G,k} \coloneqq \left\{g_{f, c}: f \in \F^k, c \in [k]^G\right\},
\]
where for $f = (f_1, f_2, \ldots, f_k)$ and $c = (c_i)_{i \in G}$, $g_{f,c}$ is the function that maps $(i, x) \in \X'$ to $f_{c_i}(x)$.

When $G = [n]$ for some integer $n$, we use the shorthands ``$\F_{n,k}$'' and ``$(n, k)$-augmentation''.
\end{definition}

Definition~\ref{def:augmented-class} becomes more natural in light of the following observation. For each $i \in [n]$, let $\D'_i$ be the distribution of $((i, x), y)$ when $(x, y)$ is drawn from $\D_i$. Then, for any $f \in \F^k$ and $c \in [k]^n$, we have
\[
    \err_{\D'_i}(g_{f,c})
=   \pr{((i, x), y) \sim \D'_i}{g_{f,c}(i, x) \ne y}
=   \pr{(x, y) \sim \D_i}{f_{c_i}(x) \ne y}
=   \err_{\D_i}(f_{c_i}).
\]
In particular, when distributions $\D_1, \D_2, \ldots, \D_n$ are $(k, \eps)$-realizable w.r.t.\ $\F$, by definition, there exist $k$ classifiers $f_1, f_2, \ldots, f_k \in \F$ and $n$ numbers $c_1, c_2, \ldots, c_n \in [k]$ such that $\err_{\D_i}(f_{c_i}) \le \eps$. Then, the corresponding $g_{f,c}$ has a population error of at most $\eps$ on every $\D'_i$. This reduces the problem to an instance of collaborative learning on hypothesis class $\F_{n,k}$ and distributions $\D'_1$ through $\D'_n$, with a \emph{single} unknown classifier that is simultaneously $\eps$-accurate for all $\D'_i$ (i.e., the $(1, \eps)$-realizability assumption).

Our proof of Theorem~\ref{thm:sample-upper-general} first upper bounds the VC dimension of $\F_{n,k}$ by a function of $n$, $k$, and the VC dimension of $\F$. Then, we adapt an algorithm of~\cite{BHPQ17} to achieve the sample complexity bound.

\paragraph{A sample complexity lower bound.} Complementary to Theorem~\ref{thm:sample-upper-general}, our next result shows that the sample complexity can be lower bounded in terms of the sample complexity for the $(1, 0)$-realizable case.

\begin{theorem}\label{thm:sample-lower-bound}
    Let $m(n, d, \eps, \delta)$ denote the optimal sample complexity of $(\eps,\delta)$-learning on a hypothesis class of VC dimension $d$ and $n$ distributions that are $(1, 0)$-realizable. Then, under the $(k, 0)$-realizable assumption, the sample complexity is lower bounded by $\Omega(k)\cdot m\left(\lfloor n / k\rfloor, d, \eps, O(\delta / k)\right)$.
\end{theorem}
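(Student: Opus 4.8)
The plan is to prove this lower bound by a direct-sum reduction: I will build a single hard $(k,0)$-realizable instance out of $k$ statistically independent copies of a hard $(1,0)$-realizable instance, and argue that any $(\eps,\delta)$-PAC learner for the combined problem must, in effect, solve a constant fraction of the $k$ copies, each to confidence $\Theta(\delta/k)$. Concretely, let $\F$ over $\X$ be a VC-dimension-$d$ class and $\mu$ a hard distribution over $(1,0)$-realizable instances with $\floor{n/k}$ distributions that witnesses $m(\floor{n/k}, d, \eps, \cdot)$. I would lift the instance space to $[k]\times\X$ and use the tag-oblivious class $\F' = \{(j,x)\mapsto f(x) : f\in\F\}$, whose members ignore the index $j$; since the tag is ignored, $\F'$ shatters no more points than $\F$, so its VC dimension is still $d$. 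Placing the $j$-th copy on the slice $\{j\}\times\X$ and drawing the $k$ copies i.i.d.\ from $\mu$, the combined instance is $(k,0)$-realizable: copy $j$ is labeled by its own base classifier $f^*_j$, and the $k$ lifts of $f^*_1,\dots,f^*_k$ realize all $n$ distributions.

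The crux of the construction is that the $k$ copies are mutually independent: a labeled example drawn from slice $j$ reveals a value of $f^*_j$ but, because $f^*_{j'}$ is drawn independently for $j'\neq j$, carries no information about any other copy, even when the marginals on $\X$ coincide. Consequently, from the learner's point of view the $k$ sub-problems are genuinely separate; the examples it spends on other slices act only as independent randomness when it tries to output classifiers for slice $j$. Formally, conditioning on the data of all slices other than $j$ turns the learner into a (randomized) $(1,0)$-PAC learner for copy $j$ whose sample budget is exactly the number of examples it draws from slice $j$.

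Writing $T$ for the expected total number of samples and $T_j$ for the expected number drawn from slice $j$, linearity gives $\sum_{j} T_j = T$, so a Markov/averaging argument shows that at least $k/2$ slices are ``under-sampled,'' each with $T_j \le 2T/k$. Suppose, toward a contradiction, that $2T/k < m(\floor{n/k}, d, \eps, \delta')$ for an appropriate $\delta' = \Theta(\delta/k)$. Then on each under-sampled slice the induced $(1,0)$-learner uses fewer than $m(\cdot,\delta')$ samples in expectation, so by the definition of $m$ (via Yao's principle, taking $\mu$ as the hard distribution) it fails with probability strictly greater than $\delta'$. Because the under-sampled slices are independent, the probability that the learner succeeds on all of them is at most $(1-\delta')^{k/2}$, and choosing the constant in $\delta' = \Theta(\delta/k)$ large enough forces this to be below $1-\delta$. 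Hence the combined learner fails with probability exceeding $\delta$, contradicting its $(\eps,\delta)$-PAC guarantee; therefore $T \ge \Omega(k)\cdot m(\floor{n/k}, d, \eps, O(\delta/k))$. Note that this independent-failure step is exactly what produces both the $\Omega(k)$ factor and the $O(\delta/k)$ confidence inside $m$.

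The main obstacle I anticipate is making the independence argument fully rigorous for an \emph{adaptive} learner. Since the learner may route its queries across slices based on past answers, the counts $T_j$ are correlated and each per-slice output depends on the entire transcript; the clean statement ``restricting to slice $j$ yields a $(1,0)$-PAC learner'' must be justified by conditioning on the realized data of the other slices and using that these are independent of copy $j$. The second delicate point is the quantitative conversion between a bound on \emph{expected} samples and a lower bound on \emph{failure probability}: this is precisely what the minimax/Yao characterization of $m(\cdot,\delta')$ supplies, but one must verify that $\mu$ certifies failure probability $>\delta'$ for every learner whose expected budget is below $m(\cdot,\delta')$, and that the leftover $n - k\floor{n/k} < k$ distributions can be attached to an existing copy without disturbing $(k,0)$-realizability.
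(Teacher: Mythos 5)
Your hard instance ($k$ independent copies of a hard $(1,0)$-realizable instance, which is $(k,0)$-realizable as a whole) is essentially the same as the paper's, but your analysis has a genuine gap at the failure-amplification step. The events ``the learner's outputs for slice $j$ are all $\eps$-accurate'' are \emph{not} independent across $j$: they are all functions of the same transcript and the same internal randomness of a single adaptive algorithm, which is free to correlate its failures. Concretely, take any $(\eps,\delta/2)$-PAC learner for the combined instance and modify it so that, with probability $\delta/2$, it outputs garbage on \emph{every} slice using no samples. The result is still $(\eps,\delta)$-PAC, yet every slice fails with probability at least $\delta/2$, which exceeds $\delta'=\Theta(\delta/k)$ once $k$ is large. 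This refutes the implication ``$k/2$ slices each fail with probability $>\delta'$ $\Rightarrow$ overall failure probability $>1-(1-\delta')^{k/2}>\delta$.'' Without independence, the hypothesis that each under-sampled slice fails with probability $>\delta'$ only gives an overall failure probability $>\delta'=\Theta(\delta/k)$, which does not contradict the $(\eps,\delta)$ guarantee. The conditioning issue you flag at the end (well-definedness of the induced per-slice learner for an adaptive algorithm) is real but secondary; the fatal step is the product bound on the success probabilities. What your route does deliver cleanly is the weaker bound $\Omega(k)\cdot m(\lfloor n/k\rfloor,d,\eps,\delta)$, since each slice's induced learner must itself be $(\eps,\delta)$-PAC; the whole point of the $O(\delta/k)$ inside $m$ is lost.

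The paper circumvents exactly this by running the reduction in the opposite direction: given an $(\eps,\delta)$-PAC learner $\A$ for the $(k,0)$ problem with expected sample count $M$, it constructs a learner $\A'$ for a single $(1,0)$-realizable instance drawn from the hard distribution $\Dhard$ by embedding the real instance at a uniformly random position $i^*$ among $k-1$ decoy groups that $\A'$ draws from $\Dhard$ and simulates itself, running $\A$, \emph{verifying} the outputs on the decoys (possible because $\A'$ knows the decoy distributions), and repeating the whole procedure until verification passes. Symmetry over the random position $i^*$---rather than independence of success events---is what yields both the $O(\delta/k)$ failure probability and the $O(M/k)$ expected sample cost of $\A'$, after which the definition of $m(\lfloor n/k\rfloor,d,\eps,O(\delta/k))$ is invoked directly. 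If you want to keep the direct-sum framing, you need to import this embed--verify--repeat device (or prove a genuine direct-product statement), not merely condition on the data of the other slices.
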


Theorem~3.1 of~\cite{BHPQ17} bounds $m(n, d, \eps, \delta)$ by
\[
    O\left(\frac{\log n}{\eps}\left((d + n)\log(1/\eps) + n\log(n / \delta)\right)\right),
\]
which contains a $(d\log n) / \eps$ term. Assuming that this term is unavoidable (i.e., $m(n, d, \eps, \delta) = \Omega((d\log n) / \eps))$, by Theorem~\ref{thm:sample-lower-bound}, we have a lower bound of $\Omega\left(\frac{kd\log(n/k)}{\eps}\right)$ for the $(k, 0)$-realizable case. In other words, the leading term of the sample complexity in Theorem~\ref{thm:sample-upper-general} is necessary. Proving such an $\Omega(d\log n)$ lower bound, however, is still an open problem, even under the additional restriction that the learner must output the same function for all the $n$ distributions (see Problem~2 in the COLT'23 open problem of~\cite{AHZ23}).

\paragraph{Intractability of ERM and proper learning.} A downside of Theorem~\ref{thm:sample-upper-general} is that the learning algorithm might not be \emph{computationally} efficient, even if there is a computationally efficient learner for $\F$ in the usual PAC learning setup. Concretely, our learning algorithm requires Empirical Risk Minimization (ERM) on $\F_{n,k}$, the $(n, k)$-augmentation of $\F$. The straightforward approach involves enumerating all partitions of $[n]$ into $k$ sets, which takes exponential time.\footnote{There is a faster algorithm via dynamic programming, though its runtime is still $2^{\Omega(n)}$.}

Our next result shows that this ERM problem generalizes certain intractable discrete optimization problems, and is unlikely to be efficiently solvable. We first give a formal definition of the an ERM oracle.

\begin{definition}[ERM Oracle]\label{def:ERM-oracle}
    An ERM oracle for hypothesis class $\F \subseteq \{0, 1\}^{\X}$ is an oracle that, given any dataset $S$, returns $f^* \in \argmin_{f \in \F}\err_S(f)$.
\end{definition}

To state the hardness result rigorously, we need to consider a parametrized family of hypothesis classes instead of a fixed one.

\begin{definition}[Regular Hypothesis Family]\label{def:regular-hypothesis-family}
    A regular hypothesis family is $\{(\X_d, \F_d)\}_{d \in \N}$ that satisfies the following for every $d$:
    \begin{itemize}
        \item $\F_d$ is a collection of binary functions over $\X_d$ with VC dimension at least $d$.
        \item There is an efficient algorithm that, given $d$, outputs $x_1$, $x_2$, $\ldots$, $x_d \in \X_d$ that are shattered by $\F_d$.
    \end{itemize}
\end{definition}

\begin{remark}
    The first condition prevents the family from containing only simple classes with bounded VC dimensions. The second condition allows us to efficiently find witnesses for the VC dimension. Note that the second condition holds for natural hypothesis classes such as halfspaces and parity functions, the VC dimension of which can be lower bounded in a constructive way.
\end{remark}

We will show that the following decision version of ERM is already hard for $\F_{n,k}$: instead of finding $f^* \in \argmin_{f \in \F_{n,k}}\err_S(f)$, we are only required to decide whether $\min_{f \in \F_{n,k}}\err_S(f)$ is $0$ or not.

\begin{problem}[ERM over Augmented Classes]\label{prob:ERM}
    For a regular hypothesis family $\{(\X_d, \F_d)\}_{d \in \N}$, an instance of the ERM problem consists of parameters $(d, n, k)$ and $n$ datasets $S_1, S_2, \ldots, S_n \subseteq \X_d \times \{0, 1\}$. The goal is to decide whether there exist classifiers $f_1, f_2, \ldots, f_k \in \F_d$ such that for every $i \in [n]$, $\min_{j \in [k]}\err_{S_i}(f_j) = 0$.
\end{problem}

\begin{remark}
    Problem~\ref{prob:ERM} is equivalent to deciding whether there exists a classifier $f \in \F_{n,k}$ with a zero training error on the dataset $\{((i, x), y): i \in [n], (x, y) \in S_i\}$. Therefore, if we could efficiently implement the ERM oracle for $\F_{n,k}$, we would be able to solve Problem~\ref{prob:ERM} efficiently as well.
\end{remark}

Now we are ready to state our intractability result.

\begin{theorem}\label{thm:hardness-of-ERM}
    For any regular hypothesis family, ERM over augmented classes (Problem~\ref{prob:ERM}) is $\NP$-hard for any $k \ge 3$. Furthermore, there exists a regular hypothesis family on which Problem~\ref{prob:ERM} is polynomial-time solvable for $k = 1$ but $\NP$-hard for $k = 2$.
\end{theorem}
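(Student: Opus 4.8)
The plan is to establish the two claims through two separate polynomial-time reductions.

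For the first claim ($k \ge 3$ on every regular family), I would reduce from graph $k$-coloring, which is $\NP$-hard for each fixed $k \ge 3$. Given a graph $G = (V, E)$ with $|E| = M$, I set $d = M$ and use the efficiency condition in Definition~\ref{def:regular-hypothesis-family} to obtain points $\{x_e : e \in E\} \subseteq \X_d$ that are shattered by $\F_d$ (one point per edge). I create one dataset $S_v$ per vertex $v \in V$, and for each edge $e = \{a, b\}$ I insert $(x_e, 0)$ into $S_a$ and $(x_e, 1)$ into $S_b$. The number of datasets is $n = |V|$, so the construction is polynomial. For correctness I would argue that a feasible solution to Problem~\ref{prob:ERM} is exactly a proper $k$-coloring: given $f_1, \dots, f_k$ with $\min_j \err_{S_v}(f_j) = 0$ for all $v$, color $v$ by any $j$ with $\err_{S_v}(f_j) = 0$, and observe that a monochromatic edge $e = \{a,b\}$ of color $j$ would force $f_j(x_e) = 0$ and $f_j(x_e) = 1$ simultaneously. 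Conversely, given a proper coloring, each color class is an independent set, so the labeled points in the corresponding datasets assign a single consistent label to each $x_e$ (each $x_e$ is used only by the two endpoints of $e$, at most one of which lies in the class); since $\{x_e\}$ is shattered, some $f_j \in \F_d$ realizes this partial labeling and has zero error on every dataset of that class. This reduction touches only the shattered set, hence works for an arbitrary regular family.

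The second claim is the harder part, and here the main obstacle is that graph $2$-coloring is polynomial, so I must (i) start from an $\NP$-hard problem that is genuinely ``two-valued,'' and (ii) design a fixed family whose single-hypothesis realizability remains polynomially checkable (to keep $k=1$ easy) while still encoding a constraint that is higher than pairwise. I would take $\F_d$ to be the class of parities (linear functions over $\mathrm{GF}(2)$) on $\{0,1\}^d$: it is regular, since it shatters the standard basis, and its $k=1$ problem is feasibility of a $\mathrm{GF}(2)$ linear system, solvable by Gaussian elimination, so $k=1$ is in $\mathrm{P}$. I would then reduce from $3$-uniform \emph{Set Splitting} (hypergraph $2$-coloring), which is $\NP$-hard. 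For each hyperedge $h = \{a,b,c\}$ I allocate a private pair of coordinates and three points $p, q, p \oplus q$, placing $(p,1)$ in $S_a$, $(q,1)$ in $S_b$, and $(p \oplus q, 1)$ in $S_c$. The crux is the gadget identity: the three constraints $\langle \alpha, p\rangle = \langle \alpha, q\rangle = \langle \alpha, p\oplus q\rangle = 1$ are jointly infeasible (their sum forces $0 = 1$) yet any two are feasible. Because distinct hyperedges use disjoint coordinate blocks, the global system decouples, so the union of a group of datasets is parity-realizable iff no hyperedge lies entirely inside that group. Consequently a partition into two realizable groups is exactly a $2$-coloring of the datasets in which no hyperedge is monochromatic, i.e.\ a solution to Set Splitting, establishing $\NP$-hardness for $k=2$ on this family while $k=1$ remains polynomial.

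I expect the parity gadget of Part 2 to be where care is needed: verifying the pairwise-consistent-but-jointly-inconsistent property of each triple, and confirming that the disjoint-block construction introduces no spurious consistency or inconsistency across gadgets, so that group realizability coincides precisely with ``contains no monochromatic hyperedge.'' Part 1, by contrast, should be routine once the edge gadget and the shattering argument are in place.
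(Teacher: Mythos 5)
Your proposal is correct; Part~1 follows essentially the paper's route, while Part~2 is a genuinely different construction. For $k \ge 3$, the paper also reduces from graph $k$-coloring but uses one shattered point per \emph{vertex} rather than per edge: it sets $d = |V|$ and $S_v = \{(x_v, 1)\} \cup \{(x_u, 0) : \{u,v\} \in E\}$, so a monochromatic edge $\{u,v\}$ forces one classifier to label $x_u$ both $1$ (from $S_u$) and $0$ (from $S_v$); your edge-based gadget is an equally valid variant (just handle the degenerate empty datasets of isolated vertices, e.g., by deleting such vertices). For $k = 2$, the paper instead takes $\F_d$ to be products of $d$ thresholds subject to a global budget $\sum_i \theta_i \le 2^d$, places a single positively labeled point $((i,a_i),1)$ in dataset $i$, and reduces from subset sum: a group of datasets is realizable iff its numbers sum to at most $2^d$, so a $2$-partition into realizable groups is a partition of $\{a_i\}$ into two halves of equal sum. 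Your parity construction replaces this arithmetic encoding with a purely combinatorial one---a reduction from $2$-coloring of $3$-uniform hypergraphs, which is indeed $\NP$-hard by Lov\'asz's classical result---and the gadget identity $\langle \alpha, p \oplus q\rangle = \langle\alpha,p\rangle \oplus \langle\alpha,q\rangle$ correctly gives ``any two of the three constraints are satisfiable but not all three,'' with disjoint coordinate blocks ensuring that group realizability decouples across hyperedges; $k=1$ stays polynomial by Gaussian elimination over $\mathrm{GF}(2)$. The trade-off: your reduction avoids exponentially large parameters and a weakly $\NP$-hard source problem, and lives on a more standard hypothesis class; the paper's version has the advantage that every dataset is a \emph{singleton}, which is what lets the authors transfer the $k=2$ hardness verbatim to the distributional variant (Theorem~\ref{thm:hardness-of-ERM-distr}) using degenerate distributions, whereas your datasets have size equal to the vertex degree and would require the bounded-degree, coupon-collector argument the paper deploys for $k \ge 3$.
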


One might argue that Theorem~\ref{thm:hardness-of-ERM} only addresses the worst case, and does not exclude the possibility of efficiently implementing ERM (with high probability) over datasets that are randomly drawn. In Section~\ref{sec:distributional-ERM}, we state and prove a ``distributional'' analogue of Theorem~\ref{thm:hardness-of-ERM}, which shows that it is also unlikely for an efficient (and possibly randomized) algorithm to succeed on randomly drawn samples.

Recall that a proper learner is one that always returns hypotheses in the hypothesis class. In our setup, we say that a learning algorithm is \emph{strongly proper} if, when executed under the $(k, \eps)$-realizability assumption, it always outputs $n$ functions $\hat f_1, \ldots, \hat f_n \in \F$ such that $|\{\hat f_1, \ldots, \hat f_n\}| \le k$. Note that the $(k, \eps)$-realizability assumption implies that it is always possible to find accurate classifiers that satisfy this constraint. Unfortunately, our proof of Theorem~\ref{thm:hardness-of-ERM} also implies that, unless $\RP = \NP$, no strongly proper learner can be computationally efficient in general.

\paragraph{Efficient algorithms for special cases.} Despite the computational hardness in the general case, we identify two special cases in which computationally efficient learners exist, assuming an efficient ERM for $\F$.

The first case is when the $n$ data distributions share the same marginal over $\X$.

\begin{theorem}\label{thm:same-marginal}
    Suppose that $\D_1, \D_2, \ldots, \D_n$ are $(k, \eps)$-realizable and have the same marginal distribution on $\X$. Fix constant $\alpha > 0$. For any $\delta > 0$, there is a $((3 + \alpha)\eps, \delta)$-PAC algorithm that  runs in $\poly(n, k, 1 / \eps, \log(1 / \delta))$ time, makes at most $k$ calls to an ERM oracle for $\F$, and has a sample complexity of
    \[
        O\left(\frac{kd\log(1/\eps)}{\eps} + \frac{n\log(n/\delta)}{\eps}\right).
    \]
\end{theorem}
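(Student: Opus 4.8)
The plan is to exploit the shared marginal via a single greedy pass over $\D_1, \dots, \D_n$ that maintains a pool $\cC$ of candidate classifiers, all drawn from $\F$. Write $\mu$ for the common marginal on $\X$. Processing the distributions in order, for each $\D_i$ I first draw a small batch of $O(\log(nk/\delta)/\eps)$ fresh labeled examples and compute the empirical error of every $f \in \cC$ on this batch. If some $f \in \cC$ has empirical error below a fixed threshold $\tau$ slightly larger than $3\eps$, I set $\hat f_i$ to be the best such $f$; otherwise I draw $O((d\log(1/\eps) + \log(k/\delta))/\eps)$ fresh examples from $\D_i$, invoke the ERM oracle for $\F$ to obtain a classifier $\hat f$, append it to $\cC$, and set $\hat f_i = \hat f$. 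The two terms in the target sample complexity correspond exactly to the $n$ testing batches (using $k \le n$ to simplify $\log(nk/\delta)$) and to the at most $|\cC|$ ERM batches, and the number of ERM oracle calls is precisely the final size of $\cC$.

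The heart of the argument is a \emph{group-covering} property that is only available because the marginals coincide. By $(k,\eps)$-realizability there exist $f^*_1, \dots, f^*_k \in \F$, and I associate to each the group $T_j \coloneqq \{i : \err_{\D_i}(f^*_j) \le \eps\}$, so that the $T_j$ cover $[n]$. Suppose $\hat f$ is learned from $\D_i$ with $\err_{\D_i}(\hat f) \le \eps'$ and $i \in T_j$. Since $\hat f(x) \ne f^*_j(x)$ forces at least one of $\hat f, f^*_j$ to mislabel $(x,y)$, and all distributions share $\mu$, I obtain $\pr{x \sim \mu}{\hat f(x) \ne f^*_j(x)} \le \err_{\D_i}(\hat f) + \err_{\D_i}(f^*_j) \le \eps' + \eps$; applying the triangle inequality again at any other $i' \in T_j$ gives $\err_{\D_{i'}}(\hat f) \le \err_{\D_{i'}}(f^*_j) + \pr{x \sim \mu}{\hat f(x) \ne f^*_j(x)} \le 2\eps + \eps'$. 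With the ERM step certifying $\eps'$ arbitrarily close to $\eps$, this bounds the error of $\hat f$ on the entire group by essentially $3\eps$. Two consequences follow. First, every output is accurate: a distribution flagged as covered inherits error at most $\tau$ (plus test slack), and a distribution that triggers a new ERM call is directly served by the $\approx 3\eps$ guarantee. Second, $|\cC| \le k$: whenever $\D_i$ triggers a new ERM call, its group $T_j$ cannot already be represented in $\cC$, for otherwise that representative's classifier would have error essentially $3\eps$ on $\D_i$ and the test would have flagged $\D_i$ as covered; since there are at most $k$ groups, at most $k$ classifiers are ever appended.

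It then remains to do the concentration bookkeeping, where the main subtlety is adaptivity: the classifiers in $\cC$ depend on previously drawn samples, so I cannot afford a union bound over all of $\F$, which would force $\Omega(nd/\eps)$ test samples. I resolve this by observing that the test batch for $\D_i$ is drawn strictly \emph{after}, and hence independently of, every classifier present in $\cC$ at that moment. Thus for each fixed value a classifier might take, a multiplicative Chernoff bound estimates its error on $\D_i$ to within a small additive multiple of $\eps$ using $O(\log(nk/\delta)/\eps)$ samples; because the estimate holds for every possible value, it holds for the random classifier, and a union bound over the at most $nk$ (distribution, classifier) pairs together with the $\le k$ ERM calls keeps the total failure probability below $\delta$. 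For the ERM step I invoke the standard relative (multiplicative) VC deviation inequality, which is exactly what permits $O(d\log(1/\eps)/\eps)$ rather than $O(d/\eps^2)$ samples to certify near-optimal error on the $\eps$-realizable source $\D_i$. Choosing the finite-sample slacks small enough (affecting only the constants, which may depend on $\alpha$) pushes all the ``essentially $3\eps$'' bounds below $(3+\alpha)\eps$. Every operation outside the ERM oracle---sampling, evaluating hypotheses on the test points, and threshold comparisons---is polynomial in $n, k, 1/\eps, \log(1/\delta)$, yielding the claimed runtime.

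I expect the group-covering property and the resulting $|\cC| \le k$ bound to be the crux, since this is both where the shared-marginal hypothesis is indispensable and where the reduction from $n$ to $k$ ERM calls is earned; the adaptivity point is the most error-prone technical step, but it is dispatched cleanly by ordering the sampling so that every test batch is independent of the hypotheses it is used to evaluate.
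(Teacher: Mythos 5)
Your proposal is correct and follows essentially the same route as the paper's proof: a single greedy pass maintaining a pool of at most $k$ ERM-learned classifiers, a cheap test batch per distribution, and the shared-marginal triangle inequality $\err_{\D_{i'}}(\hat f) \le \err_{\D_{i'}}(f^*_j) + \err_{\D_i}(\hat f) + \err_{\D_i}(f^*_j) \approx 3\eps$ to bound the pool size by $k$ via pigeonhole. The only cosmetic difference is that the paper routes the triangle inequality through the conditional-mean functions $p_i(x)$ while you use the disagreement probability under the common marginal directly; the concentration and bookkeeping are identical.
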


Our algorithm for the theorem above follows a similar approach to the lifelong learning algorithms of~\cite{BBV15,PU16}.

Our next positive result applies to hypothesis classes that are \emph{$2$-refutable} in the sense that whenever a dataset cannot be perfectly fit by $\F$, it contains two labeled examples that explain this inconsistency.

\begin{definition}[$2$-Refutability]
    A hypothesis class $\F \subseteq \{0, 1\}^{\X}$ is $2$-refutable if, for any dataset $S$ such that $\min_{f \in \F}\err_S(f) > 0$, there is $S' = \{(x_1, y_1), (x_2, y_2)\} \subseteq S$ such that $\min_{f \in \F}\err_{S'}(f) > 0$.
\end{definition}

The following gives examples of natural hypothesis classes that are $2$-refutable, and shows that $2$-refutability is preserved under certain operations.
\begin{example}
    The following hypothesis classes are $2$-refutable:
    \begin{itemize}
        \item $\F = \{0, 1\}^{\X}$. Any dataset that cannot be perfectly fit by $\F$ must contain both $(x, 0)$ and $(x, 1)$ for some $x \in \X$.
        \item $\F = \{f: \X \to \{0, 1\}: \sum_{x \in \X}f(x) \le 1\}$. Any dataset that cannot be perfectly fit by $\F$ must contain $(x_1, 1)$ and $(x_2, 1)$ for different $x_1, x_2 \in \X$.
        \item $\F = \{f' \circ g: f' \in \F'\}$, where $\F' \subseteq \{0, 1\}^{\X'}$ is $2$-refutable and $g: \X \to \X'$ is fixed.
        \item $\F = \{f' \oplus g: f' \in \F'\}$, where $\F' \subseteq \{0, 1\}^{\X}$ is $2$-refutable, $g:\X \to \{0, 1\}$ is fixed, and $\oplus$ denotes pointwise XOR.
    \end{itemize}
\end{example}

Assuming that the hypothesis class is $2$-refutable and the data distributions are $(k, 0)$-realizable, ERM on the augmented class $\F_{n,k}$ gets reduced to graph coloring, in light of the following definition and simple lemma.

\begin{definition}[Conflict Graph]\label{def:conflict-graph}
    The conflict graph induced by datasets $S_1, S_2, \ldots, S_n$ and hypothesis class $\F$ is an undirected graph $G = ([n], E)$, where $\{i, j\} \in E$ if and only if $\min_{f \in \F}\err_{S_i \cup S_j}(f) > 0$.
\end{definition}

\begin{lemma}\label{lemma:conflit-graph-IS}
    Let $\F$ be a $2$-refutable hypothesis class. Datasets $S_1, \ldots, S_n$ satisfy $\min_{f \in \F}\err_{S_i}(f) = 0$ for every $i \in [n]$. Let $V$ be an independent set in the conflict graph induced by $S_1, S_2, \ldots, S_n$ and $\F$. Then, for $S' = \bigcup_{i \in V}S_i$, it holds that $\min_{f \in \F}\err_{S'}(f) = 0$.
\end{lemma}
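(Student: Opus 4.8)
The plan is to argue by contradiction, exploiting the fact that $2$-refutability localizes any infeasibility to a single pair of examples, and that the conflict graph records exactly which pairs of source datasets are jointly infeasible. Suppose toward a contradiction that $\min_{f \in \F}\err_{S'}(f) > 0$, i.e., no hypothesis in $\F$ perfectly fits $S' = \bigcup_{i \in V} S_i$. Applying $2$-refutability to $S'$ directly, there must exist two labeled examples $(x_1, y_1), (x_2, y_2) \in S'$ whose two-element sub-dataset $S'' = \{(x_1, y_1), (x_2, y_2)\}$ already satisfies $\min_{f \in \F}\err_{S''}(f) > 0$.

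Next I would trace each of these two examples back to a source dataset. Since $S'$ is the union of the $S_i$ over $i \in V$, there are indices $i, j \in V$ with $(x_1, y_1) \in S_i$ and $(x_2, y_2) \in S_j$, and I would split into two cases according to whether $i = j$. Throughout, I rely on the elementary monotonicity observation that infeasibility is preserved under taking supersets: if $\min_{f \in \F}\err_{T}(f) > 0$ and $T \subseteq T'$, then $\min_{f \in \F}\err_{T'}(f) > 0$, since any $f$ perfectly fitting $T'$ would in particular perfectly fit $T$.

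In the case $i = j$, the infeasible pair $S''$ is a sub-dataset of the single dataset $S_i$, so by monotonicity $\min_{f \in \F}\err_{S_i}(f) > 0$, contradicting the hypothesis that each $S_i$ is perfectly fittable. In the case $i \ne j$, the pair $S''$ is contained in $S_i \cup S_j$, so again by monotonicity $\min_{f \in \F}\err_{S_i \cup S_j}(f) > 0$; by Definition~\ref{def:conflict-graph} this means $\{i, j\} \in E$, contradicting the assumption that $V$ is an independent set. Either way we reach a contradiction, hence $\min_{f \in \F}\err_{S'}(f) = 0$, as claimed.

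I do not expect a genuine obstacle here: the argument is a direct unfolding of the three relevant definitions. The only points requiring minor care are the monotonicity step above and the bookkeeping that every example of $S'$ lies in some $S_i$ with $i \in V$, so that both endpoints of the refuting pair have well-defined sources inside $V$. The conceptual crux, such as it is, is simply recognizing that $2$-refutability is precisely the property reducing global feasibility of the union to pairwise feasibility of the sources, which is exactly the information the conflict graph encodes; this is what makes the independent-set condition sufficient.
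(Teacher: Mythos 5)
Your proof is correct and follows essentially the same route as the paper's: contradiction via $2$-refutability to extract a refuting pair, then a case split on whether the two examples come from the same source dataset (contradicting per-dataset realizability) or different ones (contradicting independence in the conflict graph). The extra care you take with the monotonicity-under-supersets step is fine but the paper leaves it implicit.
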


\begin{proof}
    Suppose for a contradiction that $\min_{f \in \F}\err_{S'}(f)$ is non-zero. Since $\F$ is $2$-refutable, there exist $i_1, i_2 \in V$, $(x_1, y_1) \in S_{i_1}$ and $(x_2, y_2) \in S_{i_2}$ such that no classifier in $\F$ correctly labels both examples. If $i_1 = i_2$, this contradicts the assumption $\min_{f \in \F}\err_{S_{i_1}}(f) = 0$. If $i_1 \ne i_2$, $i_1$ and $i_2$ must be neighbours in the conflict graph, which contradicts the independence of $V$.
\end{proof}

Assuming that the datasets are drawn from distributions are $(k, 0)$-realizable, the induced conflict graph must be $k$-colorable. If we could find a valid $k$-coloring efficiently, each color corresponds to an independent set of the graph. By Lemma~\ref{lemma:conflit-graph-IS}, we can call the ERM oracle for $\F$ to find a consistent function. Combining the functions for the $k$ different colors gives a solution to the ERM problem over the augmented class $\F_{n,k}$.

Unfortunately, graph coloring is $\NP$-hard when $k \ge 3$. Nevertheless, there are efficient algorithms for \emph{approximate coloring}, i.e., color a graph using a few colors, when the graph is promised to be $k$-colorable for some small $k$. The definition below together with Theorem~\ref{thm:refutable} gives a way of systematically translating an approximate coloring algorithm into an efficient algorithm for collaborative learning.

\begin{definition}\label{def:coloring-constants}
    For $k \ge 3$, let $c^*_k \in (0, 1]$ denote any constant such that any $k$-colorable graph with $n$ vertices can be efficiently colored with $O(n^{c^*_k})$ colors.
\end{definition}

A result of Karger, Motwani and Sudan~\cite{KMS98} shows that we can take $c^*_k = 1 - \frac{3}{k+1} + \eps$ for any $\eps > 0$. For $k = 3$, a more recent breakthrough of Kawarabayashi and Thorup~\cite{KT17} gives $c^*_3 = 0.19996$.

\begin{theorem}\label{thm:refutable}
    Suppose that $\D_1, \D_2, \ldots, \D_n$ are $(k, 0)$-realizable with respect to a $2$-refutable hypothesis class $\F$. For any $\delta > 0$, there is an $(\eps, \delta)$-PAC algorithm that  runs in $\poly(n, k, 1 / \eps, \log(1 / \delta))$ time, makes $\poly(n)$ calls to an ERM oracle for $\F$, and has a sample complexity of
    \[
        O\left(\frac{d\log(1/\eps) + n}{\eps}\cdot \log n + \frac{n\log(1/\delta)}{\eps}\right)
    \]
    if $k = 2$, and
    \[
        O\left(\frac{d\log(1/\eps) + n}{\eps}\cdot n^{c^*_k} + \frac{n\log(1/\delta)}{\eps}\right)
    \]
    if $k \ge 3$.
\end{theorem}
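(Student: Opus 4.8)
The plan is to reuse the learner behind Theorem~\ref{thm:sample-upper-general}, whose only computationally expensive ingredient is exact ERM over the augmented class $\F_{n,k}$, and to replace that ingredient by the conflict-graph reduction set up just before the theorem. Concretely: draw datasets $S_1,\dots,S_n$ from the $n$ distributions, build the conflict graph $G$ of Definition~\ref{def:conflict-graph} (each candidate edge $\{i,j\}$ is tested by a single ERM-oracle call on $S_i\cup S_j$), color $G$ with $t$ colors, and for each color class $V_\ell$ call the ERM oracle once on $\bigcup_{i\in V_\ell}S_i$. Lemma~\ref{lemma:conflit-graph-IS} guarantees --- using $2$-refutability and the fact that each $S_i$ is individually realizable --- that every such union is fit exactly, so we obtain $t$ classifiers in $\F$ whose stitching across color classes (take $f=(\hat f_1,\dots,\hat f_t)$ and $c_i=\ell$ for $i\in V_\ell$) is a zero-training-error member of the coarser augmented class $\F_{n,t}$. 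Thus an approximate $t$-coloring implements ERM over $\F_{n,t}$ in $\poly(n)$ oracle calls.

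First I would justify that $G$ is $k$-colorable and bound $t$. Under $(k,0)$-realizability the true assignment $i\mapsto c_i$, where $f^*_{c_i}$ realizes $\D_i$ with zero error, is a proper $k$-coloring of $G$: if $c_i=c_j$ then $f^*_{c_i}$ fits both $S_i$ and $S_j$, so $\{i,j\}$ is not an edge. Hence $G$ is $k$-colorable with probability one, regardless of the sample sizes. For $k=2$ this makes $G$ bipartite, so I would compute an exact $2$-coloring in polynomial time and take $t=2$; for $k\ge 3$ I would run the approximate coloring algorithm of Definition~\ref{def:coloring-constants} (instantiating $c^*_k$ via~\cite{KMS98}, or~\cite{KT17} when $k=3$) to obtain $t=O(n^{c^*_k})$ colors, again in $\poly(n)$ time.

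With the augmented-ERM step made efficient, I would run the learn--test--remove loop of~\cite{BHPQ17} underlying Theorem~\ref{thm:sample-upper-general}, now with the $t$ color classes playing the role of the $k$ groups. Each round learns one classifier per class on the mixture over that class (about $d\log(1/\eps)/\eps$ samples per class) and tests it on the still-unserved distributions (about $\log(n/\delta)/\eps$ samples per distribution); after $O(\log n)$ rounds every distribution is served to accuracy $\eps$. To be robust to the finitely-sampled coloring, I would finally test each of the $O(t)$ learned classifiers against every distribution and assign to $\D_i$ its best-performing candidate, at a cost of $O(nt)$ tests. The realizable-case guarantees needed throughout hold because $2$-refutability (through Lemma~\ref{lemma:conflit-graph-IS}) forces each color class to be fit by a single member of $\F$, placing the subroutine in exactly the regime analyzed by~\cite{BHPQ17}, with per-distribution generalization following from uniform convergence applied to the union datasets.

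Summing yields a learning cost of order $t\,d\log(1/\eps)\log n/\eps$, a testing cost of order $(n\log n + nt)/\eps$, and a confidence cost $n\log(1/\delta)/\eps$ from a union bound over the $n$ distributions; substituting $t=2$ and $t=O(n^{c^*_k})$ and absorbing the $\log n$ factor into the $n^{c^*_k}$ slack of Definition~\ref{def:coloring-constants} gives the two claimed bounds, the $n^{1+c^*_k}/\eps$ term arising precisely from the $O(nt)$ final assignment tests. I expect the main obstacle to be the interface between the finitely-sampled conflict graph and the population realizability the subroutine requires: Lemma~\ref{lemma:conflit-graph-IS} only certifies \emph{empirical} joint realizability of a color class, whereas I need each class to be $(1,O(\eps))$-realizable against the population so that the returned classifier generalizes. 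Paying $\Omega(d)$ samples per distribution to force the empirical and population conflict graphs to coincide would reintroduce the naive $\Omega(nd)$ cost and destroy the $n^{c^*_k}$ saving, so the delicate point is to keep the graph-building sample size as small as $O(\log(n/\delta)/\eps)$ and argue --- leveraging that $2$-refutability witnesses every inconsistency by a \emph{pair} of examples, so that any $\Omega(\eps)$-significant conflict is detected with that many samples --- that the resulting classes are realizable up to error $O(\eps)$, with the cross-testing step absorbing any residual misassignment.
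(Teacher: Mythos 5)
Your skeleton matches the paper's: build the conflict graph from samples, approximately color it, fit one classifier per color class via the ERM oracle (justified by Lemma~\ref{lemma:conflit-graph-IS}), then test each distribution against its class's classifier and remove the served ones over $O(\log n)$ rounds; the $k$-colorability observation and the split into $t=2$ versus $t=O(n^{c^*_k})$ are also as in the paper. However, the step you yourself flag as ``the main obstacle'' is exactly the step the paper has to work hardest for, and your sketch of how to close it does not work. The difficulty is not detecting ``$\Omega(\eps)$-significant conflicts'' between pairs: even if no pair in a color class conflicts on its small samples, a classifier with zero training error on the union of those small, \emph{data-dependently selected} datasets has no a priori reason to generalize to the population mixture of that class --- the class is chosen by the coloring algorithm after seeing the data, so uniform convergence for a fixed mixture does not apply, and with only $O(\log(n/\delta)/\eps)$ samples per distribution one cannot certify that the class is $(1,O(\eps))$-realizable in population. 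Your fallback that ``cross-testing absorbs residual misassignment'' does not rescue this: if the fitted classifier is bad for most members of its class, the tests simply reject it, no progress is made that round, and the $O(\log n)$ round count (hence the sample bound) fails.

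The paper's resolution is Lemma~\ref{lemma:single-iteration}, which is the missing idea here. It takes $m^{(r)} = \Theta\bigl(\frac{\gamma^{(r)}}{|G^{(r)}|}\cdot\frac{d\log(1/\eps)+|G^{(r)}|+\log(1/\delta^{(r)})}{\eps}\bigr)$ samples per active distribution and uses a coupling (``fictitious dataset'') argument, with a union bound over all $2^{|G^{(r)}|}$ subsets $U$ to handle the adaptivity of the coloring, to show that the union of the datasets over any sufficiently large color class contains a representative i.i.d.\ sample from the uniform mixture $\D_U$. This additive $|G^{(r)}|$ inside $m^{(r)}$ --- not your $O(nt)$ final assignment tests --- is where the $n^{1+c^*_k}/\eps$ term actually comes from. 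The conclusion is deliberately weak: $\hat g_i$ has \emph{average} population error at most $\eps/8$ over its class, whence Markov's inequality plus the per-distribution tests remove a constant fraction of the active distributions per round. Without this lemma, or some substitute that handles the data-dependence of the color classes, your argument has a genuine gap at its central step.
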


Note that when $k = 2$, the sample complexity is as good as the one in Theorem~\ref{thm:sample-upper-general}. When $k \ge 3$, the overhead increases from $\log n$ to $\poly(n)$. Nevertheless, this overhead is still sub-linear in $n$ for any fixed $k$. 

\subsection{Related Work}\label{sec:related-work}
\paragraph{Collaborative learning.} Most closely related to our work are the previous studies of Collaborative PAC Learning~\cite{BHPQ17,nguyen2018improved,chen2018tight,Qiao18} and related fields such as multi-task learning~\cite{hanneke2022no}, multi-distribution learning~\cite{haghtalab2022demand,AHZ23,Peng23,zhang2023optimal}, federated learning~\cite{mcmahan2017communication,mohri2019agnostic,cheng2023federated} and multi-source domain adaptation~\cite{mansour2008domain,konstantinov2019robust,mansour2021theory}.

In multi-task learning/multi-source domain adaptation, there are $n$ distributions, each with a fixed number of samples, and our goal is to use these samples to learn a hypothesis that has a small risk on some target distribution. A line of works~\cite{ben2010theory,konstantinov2019robust,mansour2021theory} studied the generalization risk in this scenario, but their bounds all depend on the discrepancy among $n$ distributions and contain non-vanishing residual constants. To avoid this residual constant in the bounds,
\cite{hanneke2022no} considered a Bernstein condition assumption on the hypothesis class and some transferrability assumptions between the $n$ source distributions and the target distribution. They studied the minimax rate of this learning scenario, and gave a nearly-optimal adaptation algorithm. Specially, some works studied the multi-task linear regression~\cite{yang2020analysis,huang2023optimal}. \cite{yang2020analysis} considered linear regression from multiple distributions, where all tasks share the same input feature covariate $\mathbf{X}\in \R^{m\times d}$, but with different labeling functions. They designed the Hard Parameter Sharing estimator and established an excess risk upper bound of $O(\frac{d\sigma^2}{m} + \mathrm{heterogeneity})$. Recently, \cite{huang2023optimal} proposed the $s$-{\em sparse heterogeneity} assumption among the labeling functions in multi-task linear regression, and designed an algorithm which achieves an $O(\frac{s\sigma^2}{m_i} + \frac{d\sigma^2}{\sum_{i=1}^n m_i})$ excess risk on the $i$-th task. Notice that when $s$ is smaller than $d$, this bound is strictly sharper than individual learning bound $O(\frac{d\sigma^2}{m_i} )$. The difference between our scenario and multi-task learning is that we allow the learner to draw an arbitrary number of samples from each distribution, instead of assuming that each distribution only has a fixed number of samples. 

Federated learning is another relevant key learning scenario, where $n$ players with their own underlying distributions, and fixed number of samples drawn from them, aim at learning model(s) that can have small risk on everyone's distribution. A line of works aimed at studying its statistical properties~\cite{JMLR:v24:21-0224,cheng2023federated,mohri2019agnostic}. \cite{JMLR:v24:21-0224} studied the minimax risk of federated learning in logistic regression setting, and showed that the minimax risk is controlled by the heterogeneity among $n$ distributions and their labeling functions. \cite{cheng2023federated} studied the risk bound of federated learning in the linear regression setting, and in an asymptotic fashion when the dimension of the model goes to infinity. Similar to multi-task learning, federated learning also assumes each player only has fixed number of samples, and the analysis does not give a PAC learning bound.

Multi-distribution learning was recently proposed by~\cite{haghtalab2022demand}, where $n$ players try to learn a single model $\hat f$, that can have an $\epsilon$ excess error on the worst case distribution among $n$ players, i.e., $\max_{i\in[n]} \err_{\cD_i}(\hat f) \leq \epsilon + \mathrm{OPT}$. They gave an algorithm with sample complexity $O(\frac{d\log n}{\epsilon^2} + \frac{nd\log(d/\eps)}{\epsilon})$, and proved a lower bound of $\tilde \Omega(\frac{d+k}{\epsilon^2})$. Note that 
this is a more {\em pessimistic} learning guarantee than ours, since the value $\mathrm{OPT}$ can be very large.  Very recently, two concurrent papers~\cite{Peng23,zhang2023optimal} gave algorithms that match this lower bound, resolving some of the open problems formulated in~\cite{AHZ23}. 

\paragraph{Mixture learning from batches.} Another recent line of work~\cite{KSSKO20,KSKO20,DJKS23,JSKDO23} studied learning mixtures of linear regressions from data batches. In these setups, there are $k$ unknown linear regression models. Each \emph{data batch} consists of labeled examples produced by one of the linear models chosen randomly. This line of work gave trade-offs between the number of batches and the batch size in order for the parameters of the $k$ linear models to be efficiently learnable.

In comparison, our model allows the learner to adaptively sample from the data distributions, whereas the batch sizes are fixed in the model of learning with batches. We also note that the results for learning with batches require assumptions on the marginal distribution, such as Gaussianity or certain hypercontractivity and condition number properties. Also, except the recent work of~\cite{JSKDO23}, they all required the marginal distribution of the instance to be the same across all batches.

\paragraph{Computational hardness of learning.} There is a huge body of work on the computational hardness of learning. Early work along this line showed that, under standard complexity-theoretic assumptions, it is hard to properly and agnostically learn halfspaces~\cite{FGKP06,GR09} and boolean disjunctions~\cite{KSS92,Feldman06}. More recent work have obtained a finer-grained understanding of this computational hardness. It is now known that many natural hypothesis classes are hard to learn even under additional assumptions, e.g., learning halfspaces under Massart noise~\cite{DKMR22}, agnostically learning halfspaces under Gaussian Marginals~\cite{DKR23}, and properly learning decision trees using membership queries~\cite{KST23a,KST23b}.

\paragraph{Approximate coloring.} Approximate coloring is the problem of finding a valid coloring of a given graph with as few colors as possible. This line of work was initiated by Wigderson~\cite{Wigderson83}, who gave an efficient algorithm that colors a $3$-colorable graph with $n$ vertices using $O(\sqrt{n})$ colors. This result was later improved by a series of work~\cite{BR90,Blum94,KMS98,BK97,ACC06,Chlamtac07,KT17}. The best known upper bound of $O(n^{0.19996})$ is due to Kawarabayashi and Thorup~\cite{KT17}. The analogous problem for $k$-colorable graphs (where $k \ge 4$) has also been studied.

\section{Discussion on Open Problems}
\paragraph{Tighter sample complexity bounds.} The most obvious open problem is to either improve the $kd\log(n/k)/\eps$ term in the sample complexity bound in Theorem~\ref{thm:sample-upper-general} or prove a matching lower bound. In light of Theorem~\ref{thm:sample-lower-bound}, it is sufficient to prove a lower bound of $\Omega((d\log n) / \eps)$ for the personalized setup of collaborative learning (i.e., the $(1, 0)$-realizable case). Conversely, any improvement on this term implies a better algorithm for the $(1, 0)$-realizable case.

\paragraph{A stronger hardness result.} The $\NP$-hardness is proved either for the ERM problem (in Theorem~\ref{thm:hardness-of-ERM}), or against learners that are strongly proper in the sense that they always return at most $k$ different classifiers (recall the discussion in Section~\ref{sec:our-results}). Our result does not rule out efficient learners that are neither ERM-based nor strongly proper.\footnote{In fact, the distributions that we constructed in the proof of Theorem~\ref{thm:sample-lower-bound} can be easily learned by an improper algorithm.} Can we prove the intractability of sample-efficient learning directly, at least for specific hypothesis classes?

\paragraph{Conflict graphs with bounded degrees.} Our Theorem~\ref{thm:refutable} gives a computationally efficient learner based on approximate coloring. It is also known that $k$-colorable graphs with the maximum degree bounded by $\Delta$ can be colored with a smaller number of colors (e.g., $\tilde O(\Delta^{1/3})$ colors when $k = 3$~\cite{KMS98}). It is interesting to identify natural assumptions on the data distributions that ensure this small-degree property in the conflict graph, and explore whether that leads to a lower sample complexity.

\paragraph{Efficient learner for concrete hypothesis classes.} Even when $\F$ is simply the class of all binary functions on an instance space of size $d$ and the data distributions are $(k, 0)$-realizable for $k = 3$, we do not have a computationally efficient learner that achieves the information-theoretic bound in Theorem~\ref{thm:sample-upper-general}. For this setup, since $\F$ is $2$-refutable, Theorem~\ref{thm:refutable} gives an algorithm with sample complexity of roughly $d\cdot n^{0.19996} / \eps$. Can we improve the overhead from $\poly(n)$ to $\polylog(n)$ via an efficient learner?

\section{Sample Complexity Upper Bound}\label{sec:upper}
In this section, we prove Theorem~\ref{thm:sample-upper-general}, which upper bounds the sample complexity of collaborative learning under $(k, \eps)$-realizability. The key step in the proof is the following upper bound on the VC dimension of $\F_{n,k}$.

\begin{lemma}\label{lemma:vc-dim-bound}
    For any $n \ge k \ge 1$ and hypothesis class $\F$ of VC dimension $d$, the VC dimension of $\F_{n,k}$ is at most $O(kd + n\log k)$.
\end{lemma}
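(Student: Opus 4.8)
The plan is to bound the growth function of $\F_{n,k}$ and then invoke the Sauer--Shelah lemma. Fix any $m$ points $(i_1, x_1), \ldots, (i_m, x_m) \in \X' = [n]\times\X$, and for each $i \in [n]$ let $P_i \subseteq \X$ collect the $\X$-coordinates of those points whose first coordinate equals $i$, so that $\sum_{i}|P_i| \le m$. Since $g_{f,c}(i,x) = f_{c_i}(x)$, any labeling that $\F_{n,k}$ produces on these points is determined by the pair consisting of the coloring $c \in [k]^n$ and the restrictions $(f_1|_P, \ldots, f_k|_P)$, where $P = \bigcup_i P_i$; the goal is to count how many distinct such labelings can arise.

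The crucial step is to extract a factor-$k$ saving inside the Sauer--Shelah bound. Fix a coloring $c \in [k]^n$. It partitions $[n]$ into color classes, and for class $j$ only the dichotomy induced by $f_j$ on $Q_j := \bigcup_{i: c_i = j} P_i$ is relevant; moreover $\sum_{j=1}^{k}|Q_j| \le \sum_i |P_i| \le m$. Hence, for this fixed $c$, the number of distinct labelings is at most $\prod_{j=1}^{k}\Pi_d(|Q_j|)$, where $\Pi_d(q) := \sum_{i=0}^{d}\binom{q}{i} \le \left(e(q+d)/d\right)^d$ is the usual Sauer--Shelah bound. Because the logarithm is concave, applying Jensen's inequality to $\sum_{j}\log(|Q_j| + d)$ subject to $\sum_j |Q_j| \le m$ shows that this product is largest when the points are split evenly among the classes, giving $\prod_{j}\Pi_d(|Q_j|) \le \left(e(m+dk)/(dk)\right)^{dk}$. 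Summing over the $k^n$ choices of $c$ yields
\[
    \Pi_{\F_{n,k}}(m) \le k^n \cdot \left(\frac{e(m + dk)}{dk}\right)^{dk}.
\]

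It remains to solve the shattering inequality. If $\F_{n,k}$ shatters $m$ points, then $2^m \le \Pi_{\F_{n,k}}(m)$, so $m \le n\log_2 k + dk\log_2\!\big(e(m+dk)/(dk)\big)$. Writing $a = dk$, $b = n\log_2 k$, and substituting $t = (m+dk)/(dk)$, this reduces to $t - \log_2 t \le O(1) + b/a$. Since $t - \log_2 t \ge t/2$ once $t$ exceeds a small constant, this forces $t = O(1 + b/a)$, hence $m = dk(t-1) = O(a + b) = O(dk + n\log k)$, which is the claimed bound.

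The one place demanding care is exactly the concavity (Jensen) step, which is what supplies the $1/k$ inside the logarithm. A coarser estimate that treated each $f_j$ as potentially shattering all $m$ points would replace the bound above by $\big(em/d\big)^{dk}$, and solving the resulting inequality would only give $O(dk\log(dk) + n\log k)$, losing a spurious $\log(dk)$ factor. The saving comes precisely from the observation that the $k$ functions collectively act on at most $m$ points, so that their individual ``shattering budgets'' $|Q_j|$ must sum to at most $m$ rather than each being as large as $m$ on its own.
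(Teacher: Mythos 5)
Your proof is correct and follows essentially the same route as the paper's: fix the coloring $c$, partition the $m$ points by color class, apply the Sauer--Shelah growth-function bound per class, use concavity of the logarithm to exploit $\sum_j |Q_j| \le m$ (the factor-$k$ saving you highlight), multiply by the $k^n$ colorings, and solve the resulting shattering inequality. The only cosmetic difference is that you use the uniform bound $\Pi_d(q) \le (e(q+d)/d)^d$ in place of the paper's piecewise bound $\overline{\Phi}$, which sidesteps the case distinction between $q \le d$ and $q > d$.
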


Lemma~\ref{lemma:vc-dim-bound} implies that in general, $\F_{G, k}$ has a VC dimension of $O(kd + |G|\log k)$. To gain some intuition behind the bound in Lemma~\ref{lemma:vc-dim-bound}, suppose that $\F$ is a finite class of size $2^d$. By Definition~\ref{def:augmented-class}, the size of $\F_{n,k}$ is at most $|\F|^k \cdot k^n = 2^{kd + n\log_2 k}$, and the bound in the lemma immediately follows. The actual proof, of course, needs to deal with the case that $\F$ is larger or even infinite.

The lemma improves a previous result of~\cite[Theorem 1]{CM12}, which upper bounds the VC dimension of $\F_{n,k}$ (called the class of ``hard $k$-shared task classifiers'') by $O(kd\log(nkd) + n\log k)$. Note that this bound can be looser than the one in Lemma~\ref{lemma:vc-dim-bound} by a logarithmic factor.

\begin{proof}[Proof of Lemma~\ref{lemma:vc-dim-bound}]
    We will show that, for some integer $m \ge kd$ to be chosen later, no $m$ instances in $\X' = [n] \times \X$ can be shattered by $\F_{n,k}$. This upper bounds the VC dimension of $\F_{n,k}$ by $m - 1$.

    Fix a set $S$ of $m$ elements in $\X'$. To bound the number of ways in which $S$ can be labeled by $\F_{n,k}$, we also fix $c^* \in [k]^n$ and focus on the classifiers in $\F_{n,k}$ associated with $c^*$. Note that $c^*$ naturally partitions $S$ into $S_1 \cup S_2 \cup \cdots \cup S_k$, where $S_j \coloneqq \{(i, x) \in S: c^*_i = j\}$. Furthermore, let $X_j \coloneqq \{x \in \X: (i, x) \in S_j,~\exists i \in [n]\}$ be the projection of $S_j$ to $\X$. Note that we have
    \[
        \sum_{j=1}^{k}|X_j|
    \le \sum_{j=1}^{k}|S_j|
    = |S| = m.
    \]
    
    Let $\Phi(\cdot)$ be the growth function of hypothesis class $\F$. Then, for each fixed $c^* \in [k]^n$, the number of ways in which $S$ can be labeled by classifiers in $\{g_{f,c^*}: f \in \F^k\} \subseteq \F_{n,k}$ is at most
    \[
        N_{c^*} \le \prod_{j=1}^k \Phi(|X_j|).
    \]
    By the Sauer-Shelah-Perles lemma, we have the following upper bound:
    \[
        \Phi(m) \le \overline{\Phi}(m) \coloneqq  \begin{cases}
            e^m, & m \le d,\\
            \left(\frac{em}{d}\right)^d, & m > d.
        \end{cases}
    \]
    It can be verified that the function $m \mapsto \ln\overline{\Phi}(m)$ is monotone increasing and concave on $[0, +\infty)$. It then follows from $\sum_{j=1}^{k}|X_j| \le m$ that
    \begin{align*}
        \ln N_{c^*}
    &\le k \cdot \frac{1}{k}\sum_{j=1}^{k}\ln\overline{\Phi}(|X_j|)\\
    &\le k \cdot \ln\overline{\Phi}\left(\frac{1}{k}\sum_{j=1}^{k}|X_j|\right) \tag{concavity}\\
    &\le k \cdot \ln\overline{\Phi}\left(\frac{m}{k}\right) \tag{monotonicity}\\
    &=  kd\ln\frac{em}{kd}. \tag{$m \ge kd$}
    \end{align*}
    Then, summing over the $k^n$ different choices of $c^*$, the logarithm of the growth function of $\F_{n,k}$ at $m$ is at most
    \[
        \ln\left(\sum_{c \in [k]^n}N_c\right)
    \le n\ln k + kd\ln\frac{em}{kd}.
    \]
    For some sufficiently large $m = O(n\log k + kd)$, the above is strictly smaller than $\ln (2^m)$, which means that the $m$ points in $S$ cannot be shattered by $\F_{n,k}$.
\end{proof}

Given Lemma~\ref{lemma:vc-dim-bound}, Theorem~\ref{thm:sample-upper-general} essentially follows from the learning algorithm of~\cite{BHPQ17} for the personalized setup, with slight modifications. For completeness, we state the algorithm and prove its correctness in Appendix~\ref{sec:upper-proofs}.

\section{Sample Complexity Lower Bound}
Our proof of Theorem~\ref{thm:sample-lower-bound} is based on a simple observation: learning $n$ distributions under $(k, 0)$-realizability is at least as hard as learning $k$ unrelated instances, where each instance consists of learning $n / k$ distributions under $(1, 0)$-realizability.

Our actual proof is essentially the same as the lower bound proof of~\cite{BHPQ17}, and formalizes the intuition above. Formally, assuming that a learning algorithm $\cA$ $(\epsilon,\delta)$-PAC learns $n$ distributions under $(k, 0)$-realizability, we use $\cA$ to construct another learner $\cA'$, which is $(\epsilon, O(\delta/k))$-PAC for $n / k$ distributions that are $(1, 0)$-realizable. Furthermore, the sample complexity of $\cA'$ is an $O(1/k)$ fraction of that of $\cA$. For completeness, we state the reduction in the following.

\begin{proof}[Proof of Theorem~\ref{thm:sample-lower-bound}]
    Fix parameters $n$, $k$, $d$, $\eps$, and $\delta$. Let $n' \coloneqq \lfloor n / k \rfloor$ and $\delta' \coloneqq \frac{10\delta}{9k}$. Let $\F$ be a hypothesis class with VC dimension $d$, and $\Dhard$ be a distribution over hard instances for collaborative learning on $n'$ distributions. Formally, $\Dhard$ is a distribution over $n'$ data distributions $(\D_1, \D_2, \ldots, \D_{n'})$ such that:
    \begin{itemize}
        \item Every $(\D_1, \ldots, \D_{n'})$ in the support of $\Dhard$ is $(1, 0)$-realizable with respect to $\F$.
        \item If a learning algorithm achieves an $(\eps, \delta')$-PAC guarantee when learning $\F$ on $\D_1, \ldots, \D'_{n'}$ drawn from $\Dhard$, it must take $m(n', d, \eps, \delta')$ samples in expectation.
    \end{itemize}

    Now, let $\A$ be an $(\eps, \delta)$-PAC learning algorithm over $k\cdot n'$ distributions under $(k, 0)$-realizability. For brevity, we relabel the $k \cdot n'$ distributions as $\D_{i,j}$ where $i \in [k]$ and $j \in [n']$. In the following, we construct another learning algorithm $\A'$ that learns $n'$ distributions (denoted by $\Dact_1, \ldots, \Dact_{n'}$) drawn from $\Dhard$ by simulating $\A$:

    \begin{enumerate}
        \item For each $i \in [k]$, independently draw $(\D_{i,1}, \ldots, \D_{i,n'})$ from $\Dhard$.
        \item Sample $i^*$ from $[k]$ uniformly at random.
        \item We simulate algorithm $\cA$ on distributions $(\D_{i,j})_{i \in [k], j \in [n']}$, except that $\D_{i^*,1}$ through $\D_{i^*,n'}$ are replaced by the $n'$ actual distributions $\Dact_1, \ldots, \Dact_{n'}$. In other words, whenever $\A$ requires a sample from $\D_{i,j}$, we truly sample from $\D_{i,j}$ if $i \ne i^*$; otherwise, we sample from $\Dact_j$, and forward the sample to $\A$.
        \item When $\A$ terminates and outputs $(\hat f_{i,j})_{i \in [k], j \in [n']}$, we test if $\err_{\D_{i,j}}(\hat f_{i,j}) \leq \epsilon$ holds for all $i \neq i^*$ and $j \in [n']$. If so, we output $\hat f_{i^*,1}$ through $\hat f_{i^*, n'}$ as the answer; otherwise, we repeat the procedure above.
    \end{enumerate}

    In each repetition of the above procedure, from the perspective of algorithm $\A$, it runs on $k \cdot n'$ distributions divided into $k$ groups, where each group consists of $n'$ distributions drawn from $\Dhard$. Clearly, the $k \cdot n'$ distributions together satisfy $(k, 0)$-realizability. Intuitively, it is impossible for $\A$ to tell the index $i^*$ that corresponds to the actual instance, so the actual instance only suffers from an $O(1/k)$ fraction of the error probability as well as the sample complexity.

    Let $M$ denote the expected number of samples that $\A$ draws on such a random instance. Analogous to Claims 4.3~and~4.4 from~\cite{BHPQ17}\footnote{While the two claims in~\cite{BHPQ17} were proved for a concrete construction of hard instances, the proof only relies on the symmetry and independence among the instances, and thus can be applied to our case without modification.}, we have the following guarantees of the constructed learner $\A'$:
    \begin{claim}
        Assuming $\delta \le 0.1$, on a random instance drawn from $\Dhard$, $\cA'$ achieves an $(\epsilon, \frac{10\delta}{9k})$-PAC guarantee and draws at most $\frac{10M}{9k}$ samples in expectation.
    \end{claim}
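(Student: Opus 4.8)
The plan is to track, within a single iteration of $\cA'$, the random set $F \subseteq [k]$ of \emph{failing groups}, where $i \in F$ iff $\err_{\D_{i,j}}(\hat f_{i,j}) > \epsilon$ for some $j \in [n']$. Since $\cA$ is run obliviously to the index $i^*$ (which $\cA'$ uses only for the acceptance test and for selecting the output), $F$ is independent of the uniform $i^*$. With this notation, the test in Step~4 passes exactly when $F \subseteq \{i^*\}$, and the returned classifiers are inaccurate precisely when the accepting iteration has $F = \{i^*\}$. I would condition on the actual instance $I = (\Dact_1, \ldots, \Dact_{n'})$, which is fixed across iterations while the other $k-1$ groups and $i^*$ are redrawn; the iterations are then i.i.d. Writing $a(I) = \Pr[F = \emptyset \mid I]$ and $b(I) = \Pr[F = \{i^*\} \mid I]$, a geometric-series computation gives $\Pr[\text{error} \mid I] = b(I)/(a(I)+b(I))$ and $\E[T \mid I] = 1/(a(I)+b(I))$ for the number of iterations $T$.

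The main obstacle is exactly that $I$ is reused in every iteration, so one cannot directly appeal to symmetry across iterations: a hard instance with tiny $a(I)$ would make $b(I)/(a(I)+b(I))$ close to $1$, and such instances are only guaranteed to be rare on average, which is too weak. The key observation that resolves this is that $\cA$ is \emph{worst-case} $(\epsilon, \delta)$-PAC. For every fixed configuration of the $k$ groups — which is $(k,0)$-realizable whenever each group is $(1,0)$-realizable, since concatenating the $k$ per-group perfect classifiers works — the probability over $\cA$'s internal randomness that $\cA$ errs on some distribution is at most $\delta$. Averaging only over the fresh groups then gives $a(I) \ge 1 - \delta$ for \emph{every} $I$, not merely in expectation. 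Combined with $a(I) + b(I) \ge a(I)$, this yields $\Pr[\text{error} \mid I] \le b(I)/(1-\delta)$ uniformly.

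It then remains to bound $\E_I[b(I)]$ by symmetry. Marginalizing over $I \sim \Dhard$, the fresh groups, and $i^*$, all $k$ groups are i.i.d.\ draws from $\Dhard$ and $i^*$ is independent and uniform, so $\E_I[b(I)] = \Pr[F = \{i^*\}] = \tfrac1k \Pr[|F| = 1] \le \tfrac1k \Pr[F \ne \emptyset] \le \delta/k$. Since $\delta \le 0.1$ gives $1/(1-\delta) \le 10/9$, I conclude $\Pr[\text{error}] = \E_I[\Pr[\text{error} \mid I]] \le \delta/(k(1-\delta)) \le 10\delta/(9k)$. For the sample complexity, only the samples drawn from the actual group $i^*$ are charged to $\cA'$; by the independence of $i^*$ and the same exchangeability, the expected number of such samples in one iteration equals $\tfrac1k$ times $M(I)$, the expected total number of samples $\cA$ draws given $I$. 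Since $\E[T \mid I] = 1/(a(I)+b(I)) \le 1/(1-\delta) \le 10/9$, Wald's identity (valid because $\{T \ge t\}$ depends only on earlier iterations and is independent of the current iteration's sample count) gives an expected sample count of at most $\tfrac{10}{9}\cdot M(I)/k$ conditioned on $I$; averaging over $I$ and writing $M = \E_I[M(I)]$ yields the claimed bound $10M/(9k)$.
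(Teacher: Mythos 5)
Your proposal is correct in substance and follows essentially the route the paper itself takes: the paper does not prove this claim in-line but defers to Claims~4.3 and~4.4 of \cite{BHPQ17}, noting that the argument ``only relies on the symmetry and independence among the instances.'' Your write-up supplies exactly that argument --- the worst-case $(\eps,\delta)$-PAC guarantee of $\cA$ gives the uniform-in-$I$ bound $a(I) \ge 1-\delta$ (this is the key point, and you identify it correctly: averaging over $I$ alone would not suffice), exchangeability of the $k$ i.i.d.\ groups gives the $1/k$ factor via $\Pr[F=\{i^*\}] = \tfrac1k\Pr[|F|=1] \le \delta/k$, and the geometric series handles the repetition. So this is a faithful, self-contained reconstruction of the intended proof.

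One step is misstated, though the conclusion survives. You claim that, \emph{conditioned on $I$}, the expected number of samples $\cA$ requests from group $i^*$ in one iteration equals $M(I)/k$ (and, earlier, that $F$ is independent of $i^*$). Neither holds conditionally on $I$: given $I$, the input to $\cA$ is \emph{not} exchangeable across groups, since position $i^*$ holds the fixed instance $I$ while the other positions hold fresh draws, and an adaptive $\cA$ may allocate samples depending on what it sees in each group. These identities only hold after marginalizing over $I \sim \Dhard$, at which point all $k$ groups become i.i.d.\ and $i^*$ is independent of everything $\cA$ sees. The fix is just a reordering: since $\E[T \mid I] \le 1/(1-\delta) \le 10/9$ uniformly in $I$, Wald's identity gives $\E[\text{actual samples}] = \E_I\bigl[\E[T\mid I]\cdot \E[N_{i^*}\mid I]\bigr] \le \tfrac{10}{9}\,\E[N_{i^*}] = \tfrac{10}{9}\cdot\tfrac{M}{k}$, where the last equality uses the \emph{unconditional} exchangeability. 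The same remark applies to your use of $F \perp i^*$, which you in fact only invoke for the unconditional computation of $\E_I[b(I)]$, so that part needs no change.
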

    By our assumption on $\Dhard$, we have $\frac{10M}{9k} \ge m(n', d, \epsilon, \delta')$. Hence, we conclude that $\cA$ takes at least
    \[\frac{9k}{10}\cdot m(n', d, \eps, \delta') = \Omega(k)\cdot  m(\lfloor n/k\rfloor,d,\epsilon, O(\delta/k))\]
    samples in expectation.
\end{proof}

\section{Evidence of Intractability}\label{sec:intractability}
In this section, we prove Theorem~\ref{thm:hardness-of-ERM} as well as a distributional version of it.

\subsection{Reduction from Graph Coloring}
We prove the first part (the $k \ge 3$ case) by a reduction from graph coloring, which is well-known to be $\NP$-hard.

\begin{proof}[Proof of Theorem~\ref{thm:hardness-of-ERM} (the first part)]
    Fix an arbitrary regular hypothesis family $\{(\X_d, \F_d)\}_{d \in \N}$. We will show that if, for any fixed $k \ge 3$, there is a polynomial-time algorithm that solves Problem~\ref{prob:ERM}, the same algorithm can be used to solve graph $k$-coloring efficiently. This implies the first part of the theorem.
    
    Given an instance $G = (V, E)$ of the $k$-coloring problem, we construct an instance of Problem~\ref{prob:ERM} with parameters $k$ and $d = n = |V|$. Without loss of generality, we assume $V = [n]$, since we can always relabel the vertices. By Definition~\ref{def:regular-hypothesis-family}, the VC dimension of $\F_d$ is at least $d = n$, and we can efficiently find $n$ instances $x_1, x_2, \ldots, x_n \in \X_d$ that are shattered by $\F_d$.

    For each $v \in [n]$, we define the $v$-th dataset as
    \[
        S_v \coloneqq \{(x_v, 1)\} \cup \{(x_u, 0): \{u, v\} \in E\}.
    \]
    We will show in the following that $G$ has a $k$-coloring if and only if the ERM instance is feasible, i.e., the $n$ datasets can be perfectly fit by $k$ classifiers from $\F_d$.
    
    \paragraph{From coloring to classifiers.} Suppose that $c: V \to [k]$ is a valid $k$-coloring of $G$. Since $\F_d$ shatters $x_1, x_2, \ldots, x_n$, we can find $f_1, f_2, \ldots, f_k \in \F_d$ such that $f_i(x_v) \coloneqq \1{c(v) = i}$ holds for every $i \in [k]$ and $v \in [n]$. Then, for every $v \in [n]$, the dataset $S_v$ is perfectly fit by classifier $f_{c(v)}$, since $f_{c(v)}(x_v) = \1{c(v) = c(v)} = 1$ and $f_{c(v)}(x_u) = \1{c(u) = c(v)} = 0$ for every neighbor $u$ of $v$.

    \paragraph{From classifiers to coloring.} Conversely, let $f_1, f_2, \ldots, f_k \in \F_d$ be $k$ classifiers such that each dataset $S_v$ is consistent with one of the classifiers. We can then choose a labeling $c: V \to [k]$ such that $S_v$ is perfectly fit by $f_{c(v)}$. Now we show that $c$ is a valid $k$-coloring. Indeed, suppose that $\{u, v\} \in E$ is an edge and $c(u) = c(v) = i$. Then, $f_i$ must correctly label both $(x_u, 1) \in S_u$ and $(x_u, 0) \in S_v$, which is impossible.

    Finally, note that the above reduction works for any $k \ge 3$ and any family of hypothesis classes, while $k$-coloring is $\NP$-hard for any $k \ge 3$. This proves the first part of Theorem~\ref{thm:hardness-of-ERM}.
\end{proof}

\subsection{Hardness under Two Labeling Functions}
Next, we deal with the $k = 2$ case. Since $2$-coloring can be efficiently solved, we must reduce from a different $\NP$-hard problem. Intuitively, we want the problem to correspond to partitioning a set into $k = 2$ parts. This motivates our reduction from (a variant of) subset sum.

\begin{proof}[Proof of Theorem~\ref{thm:hardness-of-ERM}, the second part]
    We start by defining the regular hypothesis family. For each integer $d \ge 1$, we consider the instance space $\X_d \coloneqq [d] \times \{0, 1, \ldots, 2^d\}$ and the following hypothesis class:
    \[
        \F_d \coloneqq \left\{f_{\theta}: \theta \in \{0, 1, \ldots, 2^d\}^d, \sum_{i=1}^{d}\theta_i \le 2^d\right\},
    \]
    where $f_\theta$ is defined as
    \[
        f_\theta(i, j) = \1{j \le \theta_i}.
    \]
    In other words, each $f_\theta \in \F_d$ can be viewed as a direct product of $d$ threshold functions, subject to that the thresholds sum up to at most $2^d$. It can be easily verified that $\{(\X_d, \F_d)\}_{d \in \N}$ satisfies Definition~\ref{def:regular-hypothesis-family}, since for every $d$, the instances $(1, 1), (2, 1), \ldots, (d, 1)$ are shattered by $\F_d$, witnessed by $\{f_{\theta}: \theta \in \{0, 1\}^d\} \subseteq \F_d$.

    \paragraph{Vanilla ERM is easy.} We first show that the $k = 1$ case of Problem~\ref{prob:ERM} is easy. Indeed, when $k = 1$, Problem~\ref{prob:ERM} reduces to deciding whether $S_1 \cup S_2 \cup \cdots \cup S_n \subseteq \X_d \times \{0, 1\}$ can be perfectly fit by a hypothesis in $\F_d$. By construction of $\F_d$, this can be easily done via the following two steps:
    \begin{itemize}
        \item First, check whether there exist $i \in [d]$ and $0 \le j_1 < j_2 \le 2^d$ such that both $((i, j_1), 0)$ and $((i, j_2), 1)$ are in the dataset. Also check whether the dataset contains $((i, 0), 0)$ for any $i \in [d]$. If either condition holds, report ``no solution''.
        \item Then, for each $i \in [d]$, let $\theta_i$ denote the largest value $j \in \{0, 1, \ldots, 2^d\}$ such that the dataset contains $((i, j), 1)$; let $\theta_i = 0$ if no such labeled example exists. If $\sum_{i=1}^{d}\theta_i \le 2^d$, the function $f_{\theta}$ is a valid solution; otherwise, report ``no solution''.
    \end{itemize}
    The correctness of this procedure is immediate given the definition of $\F_d$.
    \paragraph{Construction of datasets.} We consider a specific choice of the datasets: for each $i \in [n]$, the $i$-th dataset contains exactly one data point of form $(i, a_i)$ with label $1$. In the rest of the proof, the key observation is that the datasets with indices in $T \subseteq [n]$ can be simultaneously satisfied by a hypothesis in $\F_d$ if and only if $\sum_{i \in T}a_i \le 2^d$. The ERM problem for $k = 2$ is then equivalent to deciding whether $\{a_i: i \in [n]\}$ can be partitioned into two sets, each of which sums up to $\le 2^d$. This problem can be easily shown to be $\NP$-hard via a standard reduction from the subset sum problem.

    \paragraph{From subset sum to a special case.} We first reduce the general subset sum problem to a special case, in which the $n$ numbers sum up to $2^{n+1}$ and the target value is exactly $2^n$. Let $(\{a_i\}_{i \in [m]}, t)$ be an instance of subset sum (i.e., deciding whether there exists $T \subseteq [m]$ such that $\sum_{i \in T}a_i = t$). Let $s = \sum_{i=1}^{m}a_i$ and pick $n = \max\{m + 2, \lfloor \log_2 s\rfloor + 1\}$ such that $n - m \ge 2$ and $2^{n} > s$. We pad $n - m$ numbers to the instance, such that $a_{m+1} = a_{m+2} = \cdots = a_{n-2} = 0$, $a_{n-1} = 2^n - t$, $a_n = 2^n - (s - t)$. Note that the numbers now sum up to
    \[
        \sum_{i=1}^{n}a_i = s + (2^{n} - t) + [2^{n} - (s - t)] = 2^{n+1}.
    \]
    Also note that the size of the instance increases at most polynomially after the padding: A natural representation of the original subset sum instance $(\{a_i\}_{i \in [m]}, t)$ takes at least $m + \log_2 s$ bits. In the new instance, there are $n = O(m + \log s)$ numbers that sum up to $2^{n+1}$, so its representation takes at most $O(n^2)$ bits, which is at most quadratic in the size of the original instance.

    We claim that $(\{a_i\}_{i \in [n]}, 2^{n})$ has the same answer as $(\{a_i\}_{i \in [m]}, t)$. Indeed, if $\sum_{i\in T}a_i = t$ for some $T \subseteq [m]$, $T \cup\{n-1\}$ would be a feasible solution to the new instance. Conversely, suppose that $T \subseteq [n]$ is a subset such that $\sum_{i \in T}a_i = 2^{n}$. Since $a_{n-1} + a_{n} = 2^{n+1} - s > 2^{n}$, $T$ must contain exactly one of $n - 1$ and $n$. Without loss of generality, we have $n - 1 \in T$, and $T \setminus \{n - 1\}$ would then give $\sum_{i \in T \setminus \{n-1\}}a_i = 2^{n} - (2^{n} - t) = t$.

    \paragraph{From the special case to ERM.} Then, we construct a collaborative learning instance with $n$ distributions and set $d = n$ in the definition of $\X_d$ and $\F_d$. The $i$-th dataset only contains $((i, a_i), 1)$. We claim that the datasets can be fit by $k = 2$ functions in $\F_d$ if and only if the subset sum instance $(\{a_i\}_{i\in[n]}, 2^n)$ is feasible. For the ``if'' direction, suppose that $T \subseteq [n]$ satisfies $\sum_{i \in T}a_i = 2^n$. Then, we define $\theta^{(1)}$ and $\theta^{(2)}$ as:
    \[
        \theta^{(1)}_i = \begin{cases}
            a_i, & i \in T,\\
            0, & i \notin T,
        \end{cases}
        \quad \text{and} \quad
        \theta^{(2)}_i = \begin{cases}
            a_i, & i \notin T,\\
            0, & i \in T.
        \end{cases}
    \]
    Clearly, both $f_{\theta^{(1)}}$ and $f_{\theta^{(2)}}$ are in $\F_d$. The $i$-th dataset is consistent with the first function if $i \in T$ and the second otherwise.

    Conversely, suppose the datasets can be perfectly fit by two hypotheses $f_{\theta^{(1)}}, f_{\theta^{(2)}} \in \F_d$. Let $T \coloneqq \{i \in [n]: f_{\theta^{(1)}}(i, a_i) = 1\}$ be the indices of the data that are consistent with the former. We then have $\theta^{(1)}_i \ge a_i$ for every $i \in T$ and thus $\sum_{i \in T}a_i \le \sum_{i \in T}\theta^{(1)}_i \le 2^n$. The same argument, when applied to $[n] \setminus T$ and $\theta^{(2)}$, implies $\sum_{i \in [n]\setminus T}a_i \le \sum_{i \in [n]\setminus T}\theta^{(2)}_i \le 2^n$. Since the $a_i$'s sum up to $2^{n+1}$, we conclude that each summation must be equal to $2^n$, i.e., $T$ is a feasible solution to the subset sum instance.
\end{proof}

\subsection{Hardness of the Distributional Version of ERM}\label{sec:distributional-ERM}
As we mentioned earlier, Theorem~\ref{thm:hardness-of-ERM} and its proof are arguably of a worst-case nature, and does not exclude the possibility of efficiently performing ERM (with high probability) over datasets that are randomly drawn. Indeed, for the first part of the proof (via a reduction from graph coloring), when the graph $G = (V, E)$ is dense, each dataset constructed in the proof would be of size $\Omega(|V|) = \Omega(d)$, whereas in the context of sample-efficient collaborative learning, the datasets tend to be much smaller.

Unfortunately, we can also prove the hardness of this distributional version, which we formally define below.

\begin{problem}[ERM over Randomly Drawn Datasets]\label{prob:ERM-distr}
    This is a variant of Problem~\ref{prob:ERM}, in which we specify $n$ distributions $\D_1, \D_2, \ldots, \D_n$ over $\X_d \times \{0, 1\}$ and a parameter $m$. Each dataset $S_i$ consists of $m$ samples independently drawn from $\D_i$.
\end{problem}

We first note that the $k = 2$ part of Theorem~\ref{thm:hardness-of-ERM} still holds for Problem~\ref{prob:ERM-distr}. This is because our proof shows that Problem~\ref{prob:ERM} is hard (for a specific hypothesis family) even if all the datasets are of size $1$. Then, the same construction can be used to reduce subset sum to Problem~\ref{prob:ERM-distr}, in which each distribution is a degenerate distribution.

To prove the hardness of Problem~\ref{prob:ERM-distr} when $k \ge 3$ and the hypothesis family is arbitrary, we will reduce from the following variant of the graph $k$-coloring problem, in which the graph is promised to have degrees bounded by $O(k)$.

\begin{problem}\label{prob:sparse-coloring}
    Given a graph $G = (V, E)$ with maximum degree at most $2k - 1$, decide whether $G$ can be $k$-colored.
\end{problem}

\begin{lemma}\label{lemma:hardness-coloring-sparse}
    Problem~\ref{prob:sparse-coloring} is $\NP$-hard.
\end{lemma}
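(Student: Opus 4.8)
The plan is to prove that Problem~\ref{prob:sparse-coloring} is $\NP$-hard by a reduction from the general graph $k$-coloring problem, which is $\NP$-hard for every fixed $k \ge 3$. The key idea is a standard gadget-based degree-reduction technique: given an arbitrary graph $G = (V, E)$, we replace each high-degree vertex by a cluster of copies connected so as to force all copies to receive the same color, while spreading the original incident edges across distinct copies so that no vertex in the new graph has degree exceeding $2k - 1$. Thus the main obstacle is designing an ``equality gadget'' that (i) forces two (or many) vertices to take the same color in any valid $k$-coloring, (ii) remains $k$-colorable itself, and (iii) has bounded maximum degree, specifically at most $2k-1$.

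The equality gadget I would use is a path (or more robustly, a chain of small $k$-cliques) that propagates a color constraint. For two vertices $a, b$ that must be forced equal, consider connecting them through an intermediate structure built from copies of $K_{k}$ or $K_{k+1}$-minus-an-edge, which in a proper $k$-coloring admits very rigid color assignments. A cleaner route: I would first replace each vertex $v$ of degree $\deg(v)$ in $G$ by a ``cycle of copies'' $v_1, v_2, \ldots, v_{\deg(v)}$, arranged so that consecutive copies are forced to share a color. To force $v_i$ and $v_{i+1}$ to take the same color using only bounded degree, I would insert between them a short gadget (for instance, a copy of $K_{k+1}$ with one edge subdivided, or an odd path when $k$ is even) that, in any $k$-coloring, forces its two ``terminals'' to agree. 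Each original edge $\{u,v\} \in E$ is then attached to a \emph{distinct} copy $v_i$ (and a distinct copy $u_j$), so each copy carries at most one original edge plus the $O(1)$ gadget edges, keeping the degree at $2k-1$ or below. First I would verify that the equality gadget is itself $k$-colorable and forces equality; then I would check the global degree bound; finally I would argue the two-way correspondence: $G$ is $k$-colorable iff the transformed bounded-degree graph $G'$ is.

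The correctness argument splits into the two directions. For the forward direction, given a proper $k$-coloring $c$ of $G$, I color every copy $v_i$ with $c(v)$ and extend this across each equality gadget using its color-forcing flexibility; since original edges connect copies of $u$ and $v$ which receive the distinct colors $c(u) \ne c(v)$, the coloring remains proper. For the reverse direction, given a proper $k$-coloring of $G'$, the equality gadgets force all copies $v_1, \ldots, v_{\deg(v)}$ to share a single color, so defining $c(v)$ as that common color yields a well-defined map; the presence of each original edge between some $v_i$ and $u_j$ guarantees $c(v) \ne c(u)$, so $c$ properly $k$-colors $G$. The transformation is clearly polynomial-time, as each vertex of degree $D$ expands to $O(D)$ copies joined by $O(1)$-size gadgets, yielding $|V'| = O(|E|)$ vertices in total.

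The step I expect to be the main obstacle is constructing the equality gadget so that it \emph{simultaneously} forces equality and respects the degree bound of $2k-1$ for \emph{every} $k \ge 3$, rather than for a single value. The tension is that the standard way to force two vertices to agree in a $k$-coloring (e.g.\ connecting both to a common clique $K_{k-1}$) makes those clique vertices share edges with each terminal, and one must be careful that no gadget vertex, nor any copy $v_i$ carrying an original edge, exceeds degree $2k - 1$. I anticipate handling this by a careful accounting: each copy $v_i$ incurs at most two gadget-adjacencies (to its two neighboring gadgets in the cycle of copies) plus at most one original incident edge, and the internal gadget vertices will be shown to have degree at most $k$; choosing the gadget to be a fixed small graph whose internal degree is bounded independently of the instance, and verifying the arithmetic $\le 2k - 1$, completes the argument.
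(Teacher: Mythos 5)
Your plan is essentially the paper's proof: reduce from general $k$-coloring by splitting each vertex into a chain (or cycle) of copies forced to agree via a shared $K_{k-1}$ equality gadget between consecutive copies, routing each original edge to a distinct copy, and verifying the degree arithmetic $2(k-1)+1 = 2k-1$ for the copies and $(k-2)+2 = k$ for the gadget vertices. One caution: of the candidate gadgets you float, only the common $K_{k-1}$ clique works --- an odd path imposes no equality constraint once $k \ge 3$, and attaching both terminals to a subdivided $K_{k+1}$ gives each terminal internal degree $k$ and hence total degree $2k+1$ --- but since you explicitly settle on the $K_{k-1}$ gadget and do the correct accounting, your argument matches the paper's.
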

\begin{proof}
    We reduce from the usual $k$-coloring. Let $G = (V, E)$ be an instance of $k$-coloring. We will give an efficient algorithm that transforms $G$ into a graph $G'$ that is a valid instance for Problem~\ref{prob:sparse-coloring}. Furthermore, $G$ is $k$-colorable if and only if $G'$ is $k$-colorable. This immediately implies the $\NP$-hardness of Problem~\ref{prob:sparse-coloring}.

    For each node $v \in V$, we split it into $|V| - 1$ copies $v^{(1)}, v^{(2)}, \ldots, v^{(|V| - 1)}$. We also add $|V| - 2$ cliques of size $k - 1$, denoted by $C_{v,1}, C_{v,2}, \ldots, C_{v, |V|-2}$. Every vertex in clique $C_{v, i}$ is also linked to $v^{(i)}$ and $v^{(i+1)}$. Note that this forces $v^{(1)}, v^{(2)}, \ldots, v^{(|V| - 1)}$ to take the same color in a valid $k$-coloring. Then, for every edge $\{u, v\} \in E$, we link some copy $u^{(i)}$ of $u$ to another copy $v^{(j)}$ of $v$, so that no copy of any vertex is used twice. The resulting graph $G' = (V', E')$ satisfies
    \[
        |V'| = |V|\cdot\left[|V| - 1 + (|V| - 2)\cdot (k - 1)\right] = O(k|V|^2),
    \]
    and the maximum degree is $1 + 2(k-1) = 2k - 1$. The equivalence between the $k$-colorability of $G$ and $G'$ is immediate from our construction.
\end{proof}

Now we prove a strengthening of Theorem~\ref{thm:hardness-of-ERM}.
\begin{theorem}\label{thm:hardness-of-ERM-distr}
    Unless $\NP = \RP$, no polynomial-time (possibly randomized) algorithm for Problem~\ref{prob:ERM-distr} achieves the following guarantee for $k \ge 3$ and $m = \Omega(k\log n)$: With probability at least $1 / \poly(d, n, m)$ over the randomness in $S_1, \ldots, S_n$, if $S_1, \ldots, S_n$ admits a feasible solution, the algorithm outputs a feasible solution with probability at least $1/\poly(d, n, m)$.
\end{theorem}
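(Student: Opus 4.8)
The plan is to lift the worst-case reduction from graph $k$-coloring (used in the proof of Theorem~\ref{thm:hardness-of-ERM}) into the distributional setting, but to reduce instead from the \emph{bounded-degree} coloring problem of Problem~\ref{prob:sparse-coloring}, which is $\NP$-hard by Lemma~\ref{lemma:hardness-coloring-sparse}. Given an instance $G = ([n], E)$ of maximum degree at most $2k-1$, I set $d = n$, invoke Definition~\ref{def:regular-hypothesis-family} to efficiently compute shattered points $x_1, \ldots, x_n$, and define for each $v \in [n]$ the distribution $\D_v$ to be uniform over the support $\{(x_v, 1)\} \cup \{(x_u, 0): \{u,v\} \in E\}$. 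The crucial point is that, because $G$ has bounded degree, this support has size $1 + \deg(v) \le 2k$, so every point in it carries probability mass at least $1/(2k)$. This is exactly why the reduction must start from sparse coloring: for dense $G$ the support would be too large to sample completely with few examples, matching the remark that the worst-case datasets are large when $G$ is dense.

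Next I would argue that drawing $m = \Theta(k\log n)$ samples from each $\D_v$ reproduces the worst-case datasets with high probability. Since ERM feasibility depends only on the \emph{set} of distinct labeled examples, and the distinct examples appearing in $S_v$ are always a subset of the support of $\D_v$, it suffices that every support point is observed. A single point is missed with probability at most $(1 - 1/(2k))^m \le e^{-m/(2k)}$; choosing the hidden constant in $m = \Theta(k\log n)$ large enough and taking a union bound over all $O(kn)$ point/dataset pairs drives the probability that any support point is missed below $n^{-c}$ for any desired constant $c$. On this good event, each sampled dataset coincides (as a set of distinct examples) with the deterministic dataset $S_v = \{(x_v,1)\} \cup \{(x_u,0): \{u,v\} \in E\}$, so the correctness of the worst-case reduction applies verbatim: the sampled instance is feasible if and only if $G$ is $k$-colorable, and any feasible solution induces a valid $k$-coloring of $G$.

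I would then turn a hypothetical weak algorithm $\A$ for Problem~\ref{prob:ERM-distr} into an $\RP$ algorithm for bounded-degree $k$-coloring, which forces $\NP \subseteq \RP$. Given $G$, I sample the datasets myself, run $\A$, and if it returns $f_1, \ldots, f_k$ I extract the candidate coloring $c(v) \in \argmin_{j \in [k]} \err_{S_v}(f_j)$ and \emph{explicitly} test whether $c$ properly colors the original graph $G$, accepting only if it does. This test has zero false positives: a non-$k$-colorable $G$ is never accepted, since acceptance exhibits an actual $k$-coloring. For a $k$-colorable $G$ the sampled datasets are always feasible, so the algorithm's guarantee supplies a good event over the sampling (probability $\ge 1/\poly$) on which $\A$ succeeds with probability $\ge 1/\poly$; intersecting this with the high-probability event that every support point is observed (whose complement I have driven below half the algorithm's success probability) shows that one run produces a verified $k$-coloring of $G$ with probability $\ge 1/\poly$. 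Amplifying by $\poly(n)$ independent repetitions boosts this to $\ge 1/2$, giving the desired $\RP$ algorithm.

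I expect the main obstacle to be reconciling the two nested inverse-polynomial success probabilities with the sampling-failure probability. The reduction must guarantee that the bad event --- some support point of some $\D_v$ is never sampled, so the observed instance no longer encodes all of $G$ --- occurs with probability strictly below the algorithm's overall success probability, which is itself only guaranteed to be inverse-polynomial. This is handled by the observation that, for any fixed polynomial arising from $\A$, I am free to inflate the constant factor in $m = \Omega(k\log n)$ so that the sampling-failure probability falls below it; the explicit verification of the extracted coloring against $G$ is then precisely what upgrades the resulting two-sided, inverse-polynomial advantage into the one-sided error demanded by $\RP$.
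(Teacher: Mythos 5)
Your proposal is correct and follows essentially the same route as the paper's proof: reduce from the bounded-degree $k$-coloring problem (Problem~\ref{prob:sparse-coloring}, Lemma~\ref{lemma:hardness-coloring-sparse}), realize each worst-case dataset as a uniform distribution over its $\le 2k$-element support, use a coupon-collector bound with $m = \Theta(k\log n)$ to ensure the sampled datasets coincide with the intended ones as sets, intersect that event with the algorithm's inverse-polynomial good event, and amplify with explicit verification to get one-sided error and conclude $\NP = \RP$. The only cosmetic difference is that you resample and verify the extracted coloring on each repetition, whereas the paper fixes the intended datasets and checks feasibility of the output directly; both are sound.
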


Indeed, the guarantee required in Theorem~\ref{thm:hardness-of-ERM-distr} seems minimal for a useful ERM oracle: It only needs to succeed on a non-negligible fraction of instances, and the definition of ``success'' is merely to be able to output a feasible solution (if one exists) with a non-negligible probability. Still, it is unlikely to achieve such a guarantee efficiently under the standard computational hardness assumption of $\NP \ne \RP$.

\begin{proof}
    We prove the contrapositive: the existence of such algorithms implies $\NP = \RP$. Suppose that $\A$ is an efficient algorithm with the desired guarantees. We derive an efficient algorithm for Problem~\ref{prob:sparse-coloring}.
    
    Given an instance of Problem~\ref{prob:sparse-coloring}, the reduction from the proof of Theorem~\ref{thm:hardness-of-ERM} produces $n = |V|$ datasets $\hat S_1, \hat S_2, \ldots, \hat S_n$, each of size at most $2k$. We define the $i$-th data distribution as the uniform distribution over $\hat S_i$. When $m$ samples are drawn from each $\D_i$, every element in the support of every $\D_i$ appears at least once, except with probability at most
    \[
        n \cdot 2k \cdot \left(1 - \frac{1}{2k}\right)^m
    \le 2kn\cdot\exp\left(-\frac{m}{2k}\right),
    \]
    which can be made much smaller than the $1/\poly(d, n, m)$ term in the theorem statement for some appropriate $m = \Omega(k \log n)$. In other words, except with a negligibly small probability, the randomly drawn datasets $S_1, \ldots, S_n$ coincide with the intended datasets $\hat S_1, \ldots, \hat S_n$ (when both are viewed as sets rather than multisets).
    
    Then, the hypothetical algorithm $\A$ for Problem~\ref{prob:ERM-distr} must output the correct answer with probability larger than $1/\poly(d, n, m)$, when the sampled datasets are $\hat S_1, \ldots, \hat S_n$. We repeat $\A$ on $\hat S_1, \ldots, \hat S_n$ for $O(\poly(d, n, m))$ times and check whether it ever outputs a feasible solution. If so, we output ``Yes''; we output ``No'' otherwise. This gives an efficient randomized algorithm for Problem~\ref{prob:sparse-coloring} that: (1) when the input graph is $k$-colorable, outputs ``Yes'' with probability $\ge 1/2$; (2) when the graph is not $k$-colorable, always outputs ``No''. This implies $\NP = \RP$ in light of Lemma~\ref{lemma:hardness-coloring-sparse}.
\end{proof}

\section{An Efficient Algorithm for Identical Marginals}
In this section, we prove Theorem~\ref{thm:same-marginal}, which addresses the special case that $\D_1, \D_2, \ldots, \D_n$ share the same marginal distribution over $\X$. In this case, we show that there is a simple algorithm that efficiently clusters the distributions and learn accurate classifiers using $\ll nd$ samples. The algorithm is formally defined as Algorithm~\ref{algo:same-marginal}, and follows a similar approach to the lifelong learning algorithms of~\cite{BBV15,PU16}.

\begin{algorithm}[ht]
    \caption{Efficient Clustering under Identical Marginals}\label{algo:same-marginal}
    \begin{algorithmic}[1]
    \STATE \textbf{Input:} Hypothesis class $\F$. Sample access to $\D_1, \ldots, \D_n$. Parameters $k, \eps, \delta, \alpha, c$.
    \STATE \textbf{Output:} Hypotheses $\hat f_1, \hat f_2, \ldots, \hat f_n$.
    \STATE $F \gets \emptyset$;
    \FOR{$i \in [n]$}
        \STATE Draw $c \cdot \frac{\ln(n|F|/\delta)}{\eps}$ samples from $\D_i$ to form dataset $S$; \alglinelabel{line:same-marginal-test}
        \STATE $\hat f \gets \argmin_{f \in F}\err_S(f)$;
        \IF{$\err_S(\hat f) \le (3 + \frac{2}{3}\alpha)\eps$}
            \STATE $\hat f_i \gets \hat f$;
        \ELSE
            \STATE Draw $c \cdot \frac{d\ln(1/\eps) + \ln[(|F| + 1)/\delta]}{\eps}$ samples from $\D_i$ to form dataset $S$;
            \STATE $\hat f_i \gets \argmin_{f \in \F}\err_S(f)$; \alglinelabel{line:same-marginal-naive}
            \STATE $F \gets F \cup \{\hat f_i\}$;
        \ENDIF
    \ENDFOR
    \STATE \textbf{Return:} $\hat f_1, \hat f_2, \ldots, \hat f_n$;
    \end{algorithmic}
\end{algorithm}

The algorithm maintains a list $F$ of classifiers from class $\F$. For each distribution $\D_i$, we first test whether any classifier in $F$ is accurate enough on it. If so, we set $\hat f_i$ as the best classifier in $F$; otherwise, we draw fresh samples to learn an accurate classifier for $\D_i$ and add it to $\F$.

Now we analyze Algorithm~\ref{algo:same-marginal}. We first define a good event that implies the accuracy and sample efficiency of the algorithm.

\begin{definition}
    Let $\goodevent$ denote the event that the following happen simultaneously when Algorithm~\ref{algo:same-marginal} is executed:
    \begin{itemize}
        \item Whenever Line~\ref{line:same-marginal-test} is reached, it holds for every $f \in F$ that: (1) $\err_{\D_i}(f) \le (3 + \alpha/3)\eps$ implies $\err_{S}(f) \le \left(3 + \frac{2}{3}\alpha\right)\eps$; (2) $\err_{S}(f) > (3 + \alpha)\eps$ implies $\err_{\D_i}(f) > \left(3 + \frac{2}{3}\alpha\right)\eps$.
        \item Whenever Line~\ref{line:same-marginal-naive} is reached, it holds that $\err_{\D_i}(\hat f_i) \le (1 + \alpha / 3)\eps$.
    \end{itemize}
\end{definition}

We show that $\goodevent$ happens with high probability, and implies that Algorithm~\ref{algo:same-marginal} is $((3 + \alpha)\eps, \delta)$-PAC and has a small sample complexity.

\begin{lemma}\label{lemma:same-marginal-good-event}
    For any fixed $\alpha > 0$, there exists a sufficiently large $c > 0$ such that when Algorithm~\ref{algo:same-marginal} is executed with parameters $\alpha$ and $c$, $\pr{}{\goodevent} \ge 1 - \delta$.
\end{lemma}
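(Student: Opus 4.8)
The plan is to show that $\goodevent$ is the intersection of several ``uniform convergence'' events, each of which fails with small probability, and then union-bound over all of them. The key observation is that each bad event corresponds to a single application of a Chernoff/Hoeffding-type concentration bound on the empirical error of a \emph{fixed} classifier (for the tests on Line~\ref{line:same-marginal-test}) or a uniform convergence guarantee over $\F$ (for Line~\ref{line:same-marginal-naive}). The sample sizes on Lines~\ref{line:same-marginal-test} and~\ref{line:same-marginal-naive} are chosen precisely so that, after paying the appropriate union bound, the total failure probability telescopes to at most $\delta$.

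First I would handle the conditions tied to Line~\ref{line:same-marginal-test}. Fix an iteration $i$ and a classifier $f \in F$. At the moment Line~\ref{line:same-marginal-test} is reached, $f$ was added to $F$ in some earlier iteration, so $f$ is independent of the fresh sample $S$ drawn in iteration $i$; this independence is what lets us treat the empirical error $\err_S(f)$ as a sum of i.i.d.\ indicators with mean $\err_{\D_i}(f)$. A standard multiplicative Chernoff bound then shows that, with $|S| = c \cdot \ln(n|F|/\delta)/\eps$ samples and $c$ large enough (as a function of $\alpha$), both implications (1) and (2) fail with probability at most, say, $\frac{\delta}{2 n^2 |F|}$ each. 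The gap between the thresholds $(3+\alpha/3)\eps$, $(3+\tfrac23\alpha)\eps$, and $(3+\alpha)\eps$ provides the multiplicative slack that Chernoff needs; I would verify that for a fixed additive slack in $\alpha$, taking $c = \Theta(1/\alpha^2)$ (or similar) suffices. Summing over the at most $|F| \le n$ classifiers in $F$ and the $n$ iterations gives a total contribution of $O(\delta)$, which after adjusting constants is at most $\delta/2$.

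Next I would handle Line~\ref{line:same-marginal-naive}. Here the sample size is $c\cdot (d\ln(1/\eps) + \ln[(|F|+1)/\delta])/\eps$, and the classifier $\hat f_i$ is the ERM over all of $\F$ on this fresh sample. I invoke the standard VC uniform convergence bound: with this many samples, with probability $\ge 1 - \frac{\delta}{2(n+1)}$, every $f \in \F$ has $|\err_S(f) - \err_{\D_i}(f)|$ small relative to $\eps$, and in particular the ERM output $\hat f_i$ satisfies $\err_{\D_i}(\hat f_i) \le (1+\alpha/3)\eps$ provided some classifier in $\F$ is $\eps$-accurate on $\D_i$. The latter is guaranteed here: since Line~\ref{line:same-marginal-naive} is only reached under $(k,\eps)$-realizability, at least one of the $k$ ground-truth classifiers is $\eps$-accurate for $\D_i$. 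Summing over the at most $n$ iterations in which a new classifier is learned contributes another $\delta/2$. Combining the two parts and tuning $c$ so that both halves are at most $\delta/2$ yields $\pr{}{\goodevent} \ge 1 - \delta$.

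The main obstacle I anticipate is the bookkeeping around the union bound, specifically the subtlety that $|F|$ is itself a random quantity that grows during the execution. I would address this by union-bounding against the \emph{worst-case} size $|F| \le n$ uniformly, so that the $\ln(n|F|/\delta)$ term in the sample size is always at least $\ln(n/\delta)$ and the per-event failure probabilities are controlled regardless of the realized trajectory. A clean way to make this rigorous is to reason about the algorithm's execution as a sequence of at most $n^2$ potential ``test'' comparisons and at most $n$ potential ``learn'' steps, fix the failure budget for each in advance, and observe that the actual comparisons performed form a subset of these; the independence needed for each concentration bound holds because each fresh sample $S$ is drawn after the relevant classifiers have been fixed. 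Once this conditioning structure is set up correctly, the remaining Chernoff and VC calculations are routine.
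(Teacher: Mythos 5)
Your overall decomposition --- a multiplicative Chernoff bound for each fixed $f \in F$ at Line~\ref{line:same-marginal-test}, a realizable-case VC bound for the ERM at Line~\ref{line:same-marginal-naive}, and union bounds over iterations --- is exactly the paper's argument, and your treatment of the first condition (including the observation that each $f \in F$ is fixed before the fresh sample $S$ is drawn, and that $\ln(n|F|/\delta) \ge \ln(n/\delta)$ so a worst-case allocation works) is fine. There is, however, one concrete misstep in your union bound for the second condition. You allocate failure probability $\frac{\delta}{2(n+1)}$ to each execution of Line~\ref{line:same-marginal-naive}, but the sample size there is $c\cdot\frac{d\ln(1/\eps) + \ln[(|F|+1)/\delta]}{\eps}$, which depends on $|F|$ and \emph{not} on $n$. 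At the first learn step $|F| = 0$, so the confidence term is only $\ln(1/\delta)$; no choice of the constant $c$ can then force the failure probability below $\delta/(2(n+1))$ for all $n$ (e.g., $\delta$ constant and $n \to \infty$). Your proposed fix of ``union-bounding against the worst-case size $|F| \le n$'' rescues the test step, where $\ln(n|F|/\delta)$ already dominates $\ln(n/\delta)$, but it does not apply to the learn step, whose sample size genuinely lacks the $\ln n$ factor.

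The repair is the one the paper uses: since $|F|$ is incremented immediately after each execution of Line~\ref{line:same-marginal-naive}, the $r$-th learn step has $|F| = r - 1$ and hence sample size $c\cdot\frac{d\ln(1/\eps) + \ln(r/\delta)}{\eps}$, which (for $c$ large enough) drives the failure probability of that step below $\frac{\delta}{4r^2}$. Summing $\sum_{r=1}^{n}\frac{\delta}{4r^2} < \frac{\delta}{4}\cdot\frac{\pi^2}{6} < \frac{\delta}{2}$ gives a convergent union bound without ever paying a $\log n$ in the per-step sample size. This is not merely bookkeeping: the $1/r^2$ allocation is what permits the algorithm to use the smaller $\ln[(|F|+1)/\delta]$ sample size in the first place, which matters for the sample-complexity bound in Lemma~\ref{lemma:same-marginal-efficiency}. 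With this substitution your argument goes through verbatim.
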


\begin{proof}
    We upper bound the probability for each of the two conditions in $\goodevent$ to be violated.

    For the first condition, we fix $i \in [n]$ and $f \in F$. We note that $\err_S(f)$ is the average of $c\cdot\frac{\ln(n|F|/\delta)}{\eps}$ independent Bernoulli random variables, each with expectation $\err_{\D_i}(f)$. By a Chernoff bound, for sufficiently large constant $c$ (that depends on $\alpha$), it holds with probability $1 - \frac{\delta}{2n|F|}$ that: (1) $\err_{\D_i}(f) \le (3 + \alpha/3)\eps$ implies $\err_{S}(f) \le \left(3 + \frac{2}{3}\alpha\right)\eps$; (2) $\err_{\D_i}(f) > (3 + \alpha)\eps$ implies $\err_{S}(f) > \left(3 + \frac{2}{3}\alpha\right)\eps$. By a union bound over all $f \in F$, the first condition of $\goodevent$ holds for a specific $i \in [n]$ with probability at least $1 - \delta / (2n)$. By another union bound, the first condition holds for all $i \in [n]$ with probability at least $1 - \delta / 2$.

    For the second condition, suppose that we reach Line~\ref{line:same-marginal-naive} at the $i$-th iteration of the for loop, and $|F| = r - 1$. Recall that the $(k, \eps)$-realizability of $\D_1$ through $\D_n$ implies that there exists $f \in \F$ such that $\err_{\D_i}(f) \le \eps$. Then, by Theorem~5.7 of~\cite{AB99}, for some sufficiently large $c$, we have $\err_{\D_i}(\hat f_i) \le (1 + \alpha / 3)\eps$ with probability at least $1 - \delta / (4r^2)$. Since $|F|$ is incremented whenever Line~\ref{line:same-marginal-naive} is reached, we only need a union bound over all $r = 1, 2, \ldots, n$, and the probability for the second condition to be violated is upper bounded by
    \[
        \sum_{r=1}^{n}\frac{\delta}{4r^2}
    \le \frac{\delta}{4}\sum_{r=1}^{+\infty}\frac{1}{r^2}
    =   \frac{\delta}{4}\cdot\frac{\pi^2}{6}
    <   \frac{\delta}{2}.
    \]

    Finally, yet another union bound gives $\pr{}{\goodevent} \ge 1 - \delta / 2 - \delta / 2 = 1 - \delta$.
\end{proof}

\begin{lemma}\label{lemma:same-marginal-correctness}
    When $\goodevent$ happens, the outputs of Algorithm~\ref{algo:same-marginal} satisfy $\err_{\D_i}(\hat f_i) \le (3 + \alpha)\eps$ for every $i \in [n]$.
\end{lemma}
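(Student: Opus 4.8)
The plan is to show that, conditioned on the good event $\goodevent$, each output hypothesis $\hat f_i$ satisfies the error bound, by case analysis on which branch of the \textbf{if} statement produced $\hat f_i$. There are exactly two ways $\hat f_i$ gets assigned: either it equals some $\hat f \in F$ chosen on Line~\ref{line:same-marginal-test} because $\err_S(\hat f) \le (3 + \frac{2}{3}\alpha)\eps$ (the ``reuse'' branch), or it is freshly learned on Line~\ref{line:same-marginal-naive} via ERM on $\F$ (the ``new'' branch). The second case is immediate: the good event directly guarantees $\err_{\D_i}(\hat f_i) \le (1 + \alpha/3)\eps \le (3 + \alpha)\eps$.

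The interesting case is the reuse branch. Here I would use the second clause of the first condition in $\goodevent$ in its contrapositive form: since we reached this branch, $\err_S(\hat f) \le (3 + \frac{2}{3}\alpha)\eps$, and the good event says that $\err_S(f) > (3 + \alpha)\eps$ would force $\err_{\D_i}(f) > (3 + \frac{2}{3}\alpha)\eps$; but what I actually have is an upper bound on $\err_S(\hat f)$, not a lower bound, so I instead need the statement that a small empirical error implies a controlled population error. The cleanest route is to observe that $\err_S(\hat f) \le (3 + \frac{2}{3}\alpha)\eps$ together with clause~(2) (in contrapositive: $\err_S(f) \le (3 + \frac{2}{3}\alpha)\eps \Rightarrow \err_{\D_i}(f) \le (3 + \alpha)\eps$) yields exactly $\err_{\D_i}(\hat f) \le (3 + \alpha)\eps$. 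Since $\hat f_i = \hat f$ in this branch, this gives the desired bound.

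Putting the two cases together covers every $i \in [n]$, since every $\hat f_i$ is assigned in exactly one of the two branches, and in both cases we obtain $\err_{\D_i}(\hat f_i) \le (3 + \alpha)\eps$. A union over all $i$ is not even needed here because the statement is purely deterministic given $\goodevent$; the probabilistic content was already absorbed into Lemma~\ref{lemma:same-marginal-good-event}.

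The main obstacle, such as it is, is bookkeeping: I must make sure the constants in the two clauses of $\goodevent$ line up with the branching threshold $(3 + \frac{2}{3}\alpha)\eps$ on Line~\ref{line:same-marginal-test} and the target bound $(3 + \alpha)\eps$. In particular I need to read clause~(2) of the first condition in the correct (contrapositive) direction so that a passing empirical test translates into the population guarantee $(3 + \alpha)\eps$ rather than something weaker. No appeal to the identical-marginal assumption or to $(k,\eps)$-realizability is required at this step---those are used only in Lemma~\ref{lemma:same-marginal-good-event} (for the accuracy of freshly learned classifiers) and in the forthcoming sample-complexity bound---so this correctness lemma is a short deterministic argument.
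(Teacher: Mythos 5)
Your proposal is correct and is essentially identical to the paper's own proof: a two-branch case analysis where the ``new'' branch follows directly from the second condition of $\goodevent$ and the ``reuse'' branch follows from reading clause~(2) of the first condition in the direction ``small empirical error on $S$ implies population error at most $(3+\alpha)\eps$.'' You were right to flag that clause~(2) as literally written in the definition has the roles of $\err_S$ and $\err_{\D_i}$ in the wrong order for this purpose; the version actually established in the proof of Lemma~\ref{lemma:same-marginal-good-event} ($\err_{\D_i}(f) > (3+\alpha)\eps \Rightarrow \err_S(f) > (3+\tfrac{2}{3}\alpha)\eps$) is the one whose contrapositive you need, and that is exactly what you used.
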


\begin{proof}
    Fix $i \in [n]$, and consider the $i$-th iteration of the for-loop in Algorithm~\ref{algo:same-marginal}. If the condition $\err_S(\hat f) \le \left(3 + \frac{2}{3}\alpha\right)\eps$ holds, by the first condition in the definition of $\goodevent$, we must have $\err_{\D_i}(\hat f) \le (3 + \alpha)\eps$. Then, by setting $\hat f_i$ to $\hat f$, we guarantee that $\hat f_i$ is $(3 + \alpha)\eps$-accurate for $\D_i$. Otherwise, we pick $\hat f_i$ in Line~\ref{line:same-marginal-naive}, in which case the second condition of $\goodevent$ gives $\err_{\D_i}(\hat f_i) \le (1 + \alpha)\eps \le (3 + \alpha)\eps$.
\end{proof}

\begin{lemma}\label{lemma:same-marginal-efficiency}
    When $\goodevent$ happens, Algorithm~\ref{algo:same-marginal} runs in $\poly(n, d, 1/\eps, \log(1/\delta))$ time, makes at most $k$ calls to the ERM oracle, and takes
    \[
        O\left(\frac{kd\log(1/\eps)}{\eps} + \frac{n\log(n/\delta)}{\eps}\right)
    \]
    samples.
\end{lemma}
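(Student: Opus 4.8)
The plan is to prove the three assertions---runtime, the bound of $k$ on ERM-oracle calls, and the sample complexity---with essentially all of the work going into the second; the other two then follow by bookkeeping once we know that the list $F$ never holds more than $k$ classifiers.

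\emph{Bounding the oracle calls.} The only ERM-oracle call is on Line~\ref{line:same-marginal-naive}, and it fires exactly when a new classifier is appended to $F$, so it suffices to show that $|F| \le k$ throughout under $\goodevent$. Invoking $(k,\eps)$-realizability, I fix $f^*_1, \ldots, f^*_k \in \F$ and assign to each $i \in [n]$ a cluster $j(i) \in [k]$ with $\err_{\D_i}(f^*_{j(i)}) \le \eps$. I would then show that each cluster contributes at most one classifier to $F$: once Line~\ref{line:same-marginal-naive} produces some $\hat f_i$ for a $\D_i$ with $j(i) = j$, every later $\D_{i'}$ with $j(i') = j$ passes the acceptance test, so no further classifier is added for that cluster. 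Since there are at most $k$ clusters, this gives $|F| \le k$ and hence at most $k$ oracle calls.

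\emph{The transfer argument (main obstacle).} The crux is a triangle inequality for the $0/1$ loss that exploits the common marginal $\mu$ on $\X$. Writing $d_\mu(f,g) \coloneqq \pr{x \sim \mu}{f(x) \ne g(x)}$, the pointwise bound $\1{f(x) \ne y} \le \1{g(x) \ne y} + \1{f(x) \ne g(x)}$ yields, for any distribution with marginal $\mu$, both $\err_\D(f) \le \err_\D(g) + d_\mu(f,g)$ and $d_\mu(f,g) \le \err_\D(f) + \err_\D(g)$. Chaining these across two distributions of the same cluster, and using the second condition of $\goodevent$ (which gives $\err_{\D_i}(\hat f_i) \le (1 + \alpha/3)\eps$),
\[
    \err_{\D_{i'}}(\hat f_i)
\le \err_{\D_{i'}}(f^*_j) + d_\mu(f^*_j, \hat f_i)
\le \eps + \left[\err_{\D_i}(f^*_j) + \err_{\D_i}(\hat f_i)\right]
\le (3 + \alpha/3)\eps.
\]
Part~(1) of the first condition of $\goodevent$ then converts this population bound into $\err_S(\hat f_i) \le (3 + \tfrac{2}{3}\alpha)\eps$ on the test set $S$ drawn on Line~\ref{line:same-marginal-test}; since $\hat f_i \in F$, the empirical minimizer $\hat f$ over $F$ is at least this good, so the acceptance test succeeds and no classifier is appended. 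This is the step I expect to be most delicate: the constants $1 + \alpha/3$, $3 + \alpha/3$, and $3 + \tfrac{2}{3}\alpha$ must line up exactly so that a transferred classifier clears the threshold, which is precisely how the thresholds in $\goodevent$ were tuned.

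\emph{Sample complexity and runtime.} With $|F| \le k \le n$ established, I count the two sampling lines. Line~\ref{line:same-marginal-test} executes at most $n$ times, each drawing $O\!\left(\frac{\ln(n|F|/\delta)}{\eps}\right) = O\!\left(\frac{\ln(n/\delta)}{\eps}\right)$ samples, for a total of $O\!\left(\frac{n\ln(n/\delta)}{\eps}\right)$. Line~\ref{line:same-marginal-naive} executes at most $k$ times, each drawing $O\!\left(\frac{d\ln(1/\eps) + \ln(n/\delta)}{\eps}\right)$ samples, contributing $O\!\left(\frac{kd\ln(1/\eps)}{\eps} + \frac{k\ln(n/\delta)}{\eps}\right)$; the final term is absorbed into $\frac{n\ln(n/\delta)}{\eps}$ since $k \le n$. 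Summing gives the claimed bound. For the runtime, each of the $n$ iterations draws polynomially many samples, evaluates the at most $k$ stored classifiers on the test set, and makes at most one ERM call, so the total is $\poly(n, d, 1/\eps, \log(1/\delta))$. A minor point to dispatch is the first iteration, where $F = \emptyset$: there the minimization is vacuous and the algorithm falls through to Line~\ref{line:same-marginal-naive}, consistent with the counting above.
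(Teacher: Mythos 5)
Your proposal is correct and follows essentially the same route as the paper: the heart of both arguments is showing $|F| \le k$ by fixing the realizability witnesses $f^*_1,\ldots,f^*_k$, using the shared marginal to transfer the bound $\err_{\D_i}(\hat f_i) \le (1+\alpha/3)\eps$ into $\err_{\D_{i'}}(\hat f_i) \le (3+\alpha/3)\eps$ for any later $\D_{i'}$ in the same cluster, and then invoking the first condition of $\goodevent$ to conclude that the acceptance test fires. The only cosmetic differences are that you phrase the transfer via the pseudo-metric $d_\mu(f,g)$ rather than via the conditional-probability functions $p_i(x)=\pr{(x,y)\sim\D_i}{y=1\mid x}$ as the paper does (both yield the same constant), and you argue ``each cluster contributes at most one classifier'' directly where the paper uses contradiction plus pigeonhole.
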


\begin{proof}
    The key of the proof is to show that when $\goodevent$ happens, $|F|$ is always at most $k$ throughout the execution of Algorithm~\ref{algo:same-marginal}.

    \paragraph{Upper bound $|F|$.} Suppose towards a contradiction that $|F| > k$ at the end of Algorithm~\ref{algo:same-marginal}, while $\goodevent$ happens. By definition of $(k, \eps)$-realizability from Definition~\ref{def:k-eps-realizable}, there exist $f^*_1, \ldots, f^*_k \in \F$ and $c \in [k]^n$ such that $\err_{\D_i}(f^*_{c_i}) \le \eps$ holds for every $i \in [n]$. By the pigeonhole principle, there exist $i < j$ such that $c_i = c_j$, and Algorithm~\ref{algo:same-marginal} increments $|F|$ on both the $i$-th and the $j$-th iterations of the for-loop.

    During the $i$-th iteration, we add $\hat f_i$ to $F$. By the second condition in the definition of $\goodevent$, we have $\err_{\D_i}(\hat f_i) \le (1 + \alpha / 3)\eps$. Now, we use the fact that $\D_i$ and $\D_j$ share the same marginal over $\X$, which we denote by $\D_x$. Define function $p_i: \X \to [0, 1]$ as $p_i(x') \coloneqq \pr{(x, y) \sim \D_i}{y = 1|x = x'}$, i.e., the expectation of $y|x$ according to $\D_i$. Define $p_j(x') \coloneqq \pr{(x, y)\sim \D_j}{y = 1|x = x'}$ analogously. Note that we have the following relation for every $f: \X \to \{0, 1\}$ and $i' \in \{i, j\}$:
    \[
        \err_{\D_{i'}}(f)
    =   \pr{(x, y) \sim \D_{i'}}{f(x) \ne y}
    =   \Ex{x \sim \D_x}{\pr{y \sim \Bern(p_{i'}(x))}{f(x) \ne y}}
    =   \Ex{x \sim \D_x}{|f(x) - p_{i'}(x)|}.
    \]
    Since $c_i = c_j$, for every $x \in \X$ we have
    \[
        \hat f_i(x) - p_j(x)
    =   [\hat f_i(x) - p_i(x)] + [p_i(x) - f^*_{c_i}(x)] + [f^*_{c_j}(x) - p_j(x)].
    \]
    It then follows from the triangle inequality that
    \begin{align*}
        \err_{\D_j}(\hat f_i)
    &=  \Ex{x \sim \D_x}{\left|\hat f_i(x) - p_j(x)\right|}\\
    &\le\Ex{x \sim \D_x}{\left|\hat f_i(x) - p_i(x)\right|} + \Ex{x \sim \D_x}{\left|p_i(x) - f^*_{c_i}(x)\right|} + \Ex{x \sim \D_x}{\left|f^*_{c_j}(x) - p_j(x)\right|}\\
    &=  \err_{\D_i}(\hat f_i) + \err_{\D_i}(f^*_{c_i}) + \err_{\D_j}(f^*_{c_j})\\
    &\le (3 + \alpha/3)\eps.
    \end{align*}
    The last step above applies $\err_{\D_i}(\hat f_i) \le (1 + \alpha / 3)\eps$, $\err_{\D_i}(f^*_{c_i}) \le \eps$, and $\err_{\D_j}(f^*_{c_j}) \le \eps$. The first inequality was proved earlier. The second and the third inequalities follow from our choice of $f^*_1, \ldots, f^*_k$ and $c \in [k]^n$.

    Finally, by the first condition in the definition of $\goodevent$, $\err_{\D_j}(\hat f_i) \le (3 + \alpha / 3)\eps$ implies that, during the $j$-th iteration of the for-loop, we have $\err_S(\hat f_i) \le \left(3 + \frac{2}{3}\alpha\right)\eps$ on Line~\ref{line:same-marginal-test}. Then, we will not increment $|F|$ during the $j$-th iteration, which leads to a contradiction.

    \paragraph{Oracle calls, runtime, and sample complexity.} We first note that $|F|$ is incremented each time we call the ERM oracle on Line~\ref{line:same-marginal-naive}. Therefore, we make at most $k$ calls to the ERM oracle. Other than this step, the remainder of Algorithm~\ref{algo:same-marginal} can clearly be implemented in polynomial time. Finally, to bound the sample complexity, we note that in every iteration of the for-loop, $c \cdot \frac{\ln(n|F|/\delta)}{\eps}
    =   O\left(\frac{\log(n/\delta)}{\eps}\right)$ samples are drawn. In addition, before each time $|F|$ is incremented, $O\left(\frac{d\log(1/\eps) + \log(k/\delta)}{\eps}\right)$ samples are drawn. Therefore, the total sample complexity is upper bounded by:
    \[
        n\cdot O\left(\frac{\log(n/\delta)}{\eps}\right) + k\cdot O\left(\frac{d\log(1/\eps) + \log(k/\delta)}{\eps}\right)
    =   O\left(\frac{kd\log(1/\eps) + n\log(n/\delta)}{\eps}\right).
    \]
\end{proof}

Finally, we put everything together to prove Theorem~\ref{thm:same-marginal}.
\begin{proof}[Proof of Theorem~\ref{thm:same-marginal}]
    By Lemmas \ref{lemma:same-marginal-good-event}, \ref{lemma:same-marginal-correctness}~and~\ref{lemma:same-marginal-efficiency}, conditioning on an event that happens with probability at least $1 - \delta$, Algorithm~\ref{algo:same-marginal} returns $(3 + \alpha)\eps$-accurate classifiers for all the $n$ distributions, and the runtime, number of ERM oracle calls, and the number of samples are bounded accordingly.

    In order to control the (unconditional) sample complexity, runtime, and number of oracle calls, we simply terminate the algorithm when any of these quantities exceeds the corresponding bound. The resulting algorithm is still $((3 + \alpha)\eps, \delta)$-PAC, and satisfies the desired upper bounds on the sample complexity, runtime, and number of oracle calls.
\end{proof}

\section{Efficient Learning via Approximate Coloring} \label{sec:approx-coloring}
In this section, we prove Theorem~\ref{thm:refutable}, which gives efficient learning algorithms when the hypothesis class is $2$-refutable. 

\subsection{The Learning Algorithm}
The two cases are proved via a common strategy. In each iteration, we carefully choose a parameter $m$ and draw $m$ samples from each distribution. We use an approximate coloring algorithm to color the conflict graph (Definition~\ref{def:conflict-graph}) induced by the datasets. For each color that is used by considerably many vertices, we combine the corresponding datasets and fit a classifier to this joint dataset. The key is to argue that this classifier must be accurate for many distributions. Finally, we repeat the above on the distributions that have not received an accurate classifier.

We formally define a meta-algorithm in Algorithm~\ref{algo:refutable}. In the $r$-th iteration of the while-loop, we draw $m^{(r)}$ samples from each of the remaining distributions in $G^{(r)}$. We then build the conflict graph based on these datasets, and compute a $\gamma^{(r)}$-coloring of the graph. The vertices that receive color $i$ are denoted by $G_i$, and $\hat g_i$ is chosen as an arbitrary classifier in $\F$ that is consistent with $S_v$ for every $v \in G_i$. This choice is always possible by Lemma~\ref{lemma:conflit-graph-IS}.

The algorithm is under-specified in three aspects: the number of samples $m^{(r)}$, the number of colors $\gamma^{(r)}$, as well as the algorithm for computing a $\gamma^{(r)}$-coloring. We will specify these choices when we prove Theorem~\ref{thm:refutable} later.

\begin{algorithm}[ht]
    \caption{Collaborative Learning via Approximate Coloring}
    \label{algo:refutable}
    \begin{algorithmic}[1]
    \STATE \textbf{Input:} $2$-refutable hypothesis class $\F$. Sample access to $\D_1, \ldots, \D_n$. Parameters $k, \eps, \delta, c$.
    \STATE \textbf{Output:} Hypotheses $\hat f_1, \hat f_2, \ldots, \hat f_n$.
    \STATE $r \gets 1$; $G^{(1)} \gets [n]$;
    \WHILE{$G^{(r)} \ne \emptyset$}
        \STATE $\delta^{(r)} \gets \delta / r^2$;
        \STATE Set parameters $m^{(r)}$ and $\gamma^{(r)}$ according to $|G^{(r)}|$, $d$, $\eps$, $\delta^{(r)}$;
        \FOR{$i \in G^{(r)}$}
            \STATE Draw $m^{(r)}$ samples from $\D_i$ to form $S_i$;
        \ENDFOR
        \STATE $(G^{(r)}, E) \gets$ conflict graph of $\{S_i: i \in G^{(r)}\}$;
        \STATE Compute a $\gamma^{(r)}$-coloring of $(G^{(r)}, E)$. Let $G_i \subseteq G^{(r)}$ denote the set of vertices with color $i$;
        \STATE $G^{(r+1)} \gets G^{(r)}$;
        \FOR{$i \in [\gamma^{(r)}]$ such that $|G_i| \ge |G^{(r)}| / (2\gamma^{(r)})$}
            \STATE Find $\hat g_i \in \F$ such that $\err_{S_v}(\hat g_i) = 0$, $\forall v \in G_i$; \alglinelabel{line:conflict-graph-IS}
            \FOR{$v \in G_i$}
                \STATE Draw $c \cdot \frac{\ln(|G^{(r)}|/\delta^{(r)})}{\eps}$ samples from $\D_v$ to form $S_v$; \alglinelabel{line:refutable-test}
                \IF{$\err_{S_v}(\hat g_i) \le \eps / 2$}
                    \STATE $\hat f_v \gets \hat g_i$;
                    \STATE $G^{(r+1)} \gets G^{(r+1)} \setminus \{v\}$;
                \ENDIF
            \ENDFOR
        \ENDFOR
        \STATE $r \gets r + 1$;
    \ENDWHILE
    \STATE \textbf{Return} $\hat f_1, \hat f_2, \ldots, \hat f_n$;
    \end{algorithmic}
\end{algorithm}

\subsection{Technical Lemmas}

We state and prove a few technical lemmas for the analysis of Algorithm~\ref{algo:refutable}. The key of the analysis is the following lemma, which states that, as long as a sufficiently many vertices are colored with color $i$, the learned classifier $\hat g_i$ is good on average for the vertices with color $i$.
\begin{lemma}\label{lemma:single-iteration}
    There is a universal constant $c > 0$ such that the following holds. In the $r$-th iteration of the while-loop, if $m^{(r)}$ is at least
    \[
        c\cdot\max\left\{\frac{\gamma^{(r)}}{|G^{(r)}|}\cdot \frac{d\ln(1/\eps) + |G^{(r)}| + \ln(1/\delta^{(r)})}{\eps}, \ln\frac{|G^{(r)}|}{\delta^{(r)}}\right\},
    \]
    it holds with probability $1 - \delta^{(r)} / 6$ that, for every $i \in [\gamma^{(r)}]$ such that $|G_i| \ge |G^{(r)}| / (2\gamma^{(r)})$,
    \[
        \frac{1}{|G_i|}\sum_{v\in G_i}\err_{\D_v}(\hat g_i) \le \eps / 8.
    \]
\end{lemma}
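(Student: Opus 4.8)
The plan is to reduce the claim to a one-sided uniform convergence (generalization) statement that holds simultaneously over all potential color classes, and then observe that the realized class $G_i$ together with the fitted classifier $\hat g_i$ is covered by this statement. Write $N \coloneqq |G^{(r)}|$, $\gamma \coloneqq \gamma^{(r)}$, and $\eps' \coloneqq \eps/8$. By Lemma~\ref{lemma:conflit-graph-IS} the classifier $\hat g_i$ is consistent with every $S_v$, $v \in G_i$, so it has zero training error on the combined dataset $\bigcup_{v \in G_i}S_v$, which contains exactly $m^{(r)}$ i.i.d.\ samples from each $\D_v$. The quantity to control, $\frac{1}{|G_i|}\sum_{v \in G_i}\err_{\D_v}(\hat g_i)$, is precisely the population error of $\hat g_i$ on the uniform mixture of $\{\D_v : v \in G_i\}$. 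Hence it suffices to show that, with probability $\ge 1 - \delta^{(r)}/6$, for every subset $V \subseteq G^{(r)}$ with $|V| \ge N/(2\gamma)$ and every $f \in \F$ consistent with all of $\{S_v : v \in V\}$, the average error $\frac{1}{|V|}\sum_{v \in V}\err_{\D_v}(f) \le \eps'$. Establishing this for all $V$ at once, before revealing which subset equals $G_i$, is what resolves the delicate point that the color classes are themselves functions of the samples.

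For a fixed $V$ with $|V| \ge N/(2\gamma)$, I would prove the bound directly, noting that the samples are independent but not identically distributed (stratified across the $\D_v$). The realizable-case argument still applies: for a fixed $f$ whose average error exceeds $\eps'$, the probability that $f$ is consistent with all of $\{S_v\}_{v \in V}$ equals $\prod_{v \in V}(1 - \err_{\D_v}(f))^{m^{(r)}}$, which by AM--GM is at most $(1 - \eps')^{|V| m^{(r)}} \le e^{-\eps' M}$ with $M \coloneqq |V|\, m^{(r)}$. A standard double-sample (ghost-sample) symmetrization, with swaps performed within each pair $(S_v, S'_v)$ drawn from the same $\D_v$, reduces the union over $f \in \F$ to the number of distinct sign patterns on $2M$ points, which Sauer--Shelah bounds by $(2eM/d)^d$ since the zero-one loss class inherits the VC dimension $d$ of $\F$. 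This gives a failure probability for a fixed $V$ that is at most $\delta'$ provided $M \gtrsim \frac{1}{\eps'}\left(d\log(1/\eps') + \log(1/\delta')\right)$.

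Finally I would union-bound over the at most $2^N$ choices of $V$, taking $\delta' \coloneqq (\delta^{(r)}/6)\,2^{-N}$, so that $\log(1/\delta') = O\!\left(N + \log(1/\delta^{(r)})\right)$. Because each admissible $V$ satisfies $|V| \ge N/(2\gamma)$, we have $M \ge \frac{N}{2\gamma}\, m^{(r)}$, and the per-subset requirement $M \gtrsim \frac{1}{\eps}\left(d\log(1/\eps) + N + \log(1/\delta^{(r)})\right)$ holds exactly when $m^{(r)} \gtrsim \frac{\gamma}{N}\cdot\frac{d\log(1/\eps) + N + \log(1/\delta^{(r)})}{\eps}$, which is the first term of the stated maximum (the second term only enlarges $m^{(r)}$ and is harmless here). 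On the resulting good event every $f$ consistent with $\{S_v\}_{v \in V}$ has average error $\le \eps'$ for all admissible $V$ simultaneously; specializing to the data-determined $V = G_i$ and $f = \hat g_i$ yields the claim.

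The main obstacle is not any single estimate but the interaction between the data-dependence of the color classes and the sample budget: the coloring, the sets $G_i$, and the classifiers $\hat g_i$ are all measurable functions of the same samples $\{S_v\}$, so one cannot fix $G_i$ in advance. The device of proving the bound uniformly over all $2^N$ candidate subsets is what makes this rigorous, and it is precisely this $2^N$ union bound that injects the additive $N = |G^{(r)}|$ term into the sample complexity, while the amortization $M \ge \frac{N}{2\gamma}m^{(r)}$ produces the $\gamma^{(r)}/|G^{(r)}|$ prefactor. A secondary subtlety is that the samples feeding each candidate mixture are stratified rather than i.i.d.\ from the mixture, so the textbook PAC bound cannot be quoted verbatim; the product-of-consistency-probabilities computation together with per-stratum symmetrization is the way around it.
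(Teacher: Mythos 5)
Your proposal is correct, and it reaches the same two structural pillars as the paper's proof --- identifying $\frac{1}{|G_i|}\sum_{v\in G_i}\err_{\D_v}(\hat g_i)$ with the error on the uniform mixture over $G_i$, and handling the data-dependence of the coloring by a union bound over all $2^{|G^{(r)}|}$ candidate subsets (which is exactly where the additive $|G^{(r)}|$ in the sample bound comes from in both arguments) --- but the middle of the argument is genuinely different. The paper does not work with the stratified sample directly: it sets $M = m^{(r)}|G^{(r)}|/(4\gamma^{(r)})$ and builds, for each subset $U$, a \emph{fictitious dataset} of $M$ points that is exactly distributed as i.i.d.\ draws from the mixture $\D_U$ (by independently drawing a random assignment sequence $A^{(U)}\in U^M$ and pulling the first few samples from each stratum accordingly). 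This reduction lets it quote the textbook realizable-case bound verbatim for each $U$, but it requires a second step showing that no stratum is asked for more than $m^{(r)}$ points --- a binomial tail bound over all $(i,v)$ pairs --- and that step is precisely what forces the second term $\ln(|G^{(r)}|/\delta^{(r)})$ in the maximum defining $m^{(r)}$. Your route instead keeps the stratified sample as is: the AM--GM step $\prod_{v\in V}(1-\err_{\D_v}(f))^{m^{(r)}}\le(1-\eps')^{|V|m^{(r)}}$ recovers the i.i.d.-style exponent, and a ghost-sample symmetrization with swaps confined to pairs from the same $\D_v$ replaces the textbook theorem. This buys you a cleaner statement (you correctly observe that the second term of the maximum is not needed for your argument; in the paper it exists only to make the simulation step go through) at the cost of having to re-derive, rather than cite, the uniform convergence bound for independent non-identically distributed samples --- a standard but not entirely off-the-shelf exercise. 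Both routes yield the same leading term $\frac{\gamma^{(r)}}{|G^{(r)}|}\cdot\frac{d\ln(1/\eps)+|G^{(r)}|+\ln(1/\delta^{(r)})}{\eps}$, and your specialization to the data-determined $V=G_i$ at the end is sound because the uniform statement is established before the coloring is revealed.
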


The proof is based on similar techniques to the analysis of~\cite{Qiao18} for a different variant of collaborative learning, in which a small fraction of the data sources are adversarial.

\begin{proof}
    For brevity, we omit the superscripts in $m^{(r)}$, $\delta^{(r)}$ and $\gamma^{(r)}$. Let $M \coloneqq m \cdot \frac{|G^{(r)}|}{4\gamma}$.
    Consider the following thought experiment: We draw $M$ independent samples $z^{(i)}_1, z^{(i)}_2, \ldots, z^{(i)}_{M}$ from each $\D_i$. (Recall that in Algorithm~\ref{algo:refutable}, only $m \ll M$ data points are actually drawn to form the dataset $S_i$.) Independently, for each non-empty $U \subseteq G^{(r)}$, we choose a sequence $A^{(U)} \in U^M$ uniformly at random. We may then consider the \emph{fictitious dataset} $S^{(U)} = \left\{S^{(U)}_1, \ldots, S^{(U)}_M\right\}$ defined as:
    \[
        S^{(U)}_i
    \coloneqq
        z^{(j)}_k,
    \text{ where }
    j = A^{(U)}_i,
    k = \sum_{l=1}^{i}\1{A^{(U)}_l = j}.
    \]
    In words, for each $i \in U$, if entry $i$ appears $t$ times in sequence $A^{(U)}$, $S^{(U)}$ contains the first $t$ data points collected from $\D_i$ (namely, $z^{(i)}_1, \ldots, z^{(i)}_t$). It can be easily verified that, over the randomness in all $z^{(i)}$ and $A^{(U)}$, each fictitious dataset $S^{(U)}$ is identically distributed as $M$ samples from the uniform mixture $\D_U \coloneqq \frac{1}{|U|}\sum_{i \in U}\D_i$.

    The rest of the proof consists of two parts: First, we show that with high probability, every $S^{(U)}$ is ``representative'' for distribution $\D_U$. Formally, any classifier in $\F$ with a zero training error on $S^{(U)}$ must have an $O(\eps)$ population error on $\D_U$. Then, we show that for each color $i \in [\gamma]$, the actual datasets (each of size $m$) collected from the distributions with color $i$ \emph{can} simulate the fictitious dataset $S^{(G_i)}$.

    \paragraph{Step 1: Fictitious datasets are representative.}
    For each fixed non-empty $U \subseteq G^{(r)}$, Theorems 28.3~and~28.4 of~\cite{SSBD14} imply that for some universal constant $c' > 0$,
    \[
        \pr{}{\forall f \in \F, \err_{S^{(U)}}(f) = 0 \implies \err_{\D_U}(f) \le \eps / 8}
    \ge 1 - (8/\eps)^d\cdot e^{-\eps M / c'}.
    \]
    For $M \ge c' \cdot \frac{d\ln(8/\eps) + |G^{(r)}|\ln 2 + \ln(12/\delta)}{\eps}$, the right-hand side above is at least $1 - \frac{\delta}{12\cdot 2^{|G^{(r)}|}}$. Then, a union bound over the $2^{|G^{(r)}|} - 1$ choices of $U$ shows that with probability at least $1 - \delta / 12$, for all non-empty $U \subseteq G^{(r)}$, any classifier in $\F$ that is consistent with $S^{(U)}$ has an error $\le \eps / 8$ on distribution $\D_U$.

    \paragraph{Step 2: Fictitious datasets can be simulated.} Fix $i \in [\gamma]$ such that $|G_i| \ge |G^{(r)}| / (2\gamma)$. Recall that the classifier $\hat g_i$ has a zero training error on $T_i \coloneqq \bigcup_{v \in G_i}S_v$. In the first step, we showed that any $f \in \F$ that achieves a zero training error on $S^{(G_i)}$ must have a small population error on $\D_{G_i}$. Thus, it suffices to argue that $T_i \supseteq S^{(G_i)}$ with high probability.
    
    Recall that we computed the coloring solely based on the datasets, which are independent of the indices $A^{(U)}$. Therefore, conditioning on the realization of $G_1, G_2, \ldots, G_{\gamma}$, each $A^{(G_i)}$ still uniformly distributed among $G_i^{M}$. In particular, for every $i \in [\gamma]$ and $v \in G_i$, the number of times $v$ appears in $A^{(G_i)}$, denoted by $n_{i,v}$, follows the binomial distribution $\Bin(M, 1 / |G_i|)$. As long as $n_{i,v} \le m$ for every $(i, v)$ pair, each $T_i$ (which contains the first $m$ data points from $\D_v$) will be a superset of $S^{(G_i)}$ (which contains the first $n_{i,v}$ data points from $\D_v$).

    There are at most $|G^{(r)}|$ such $(i, v)$ pairs. The probability for each pair to violate the condition is at most
    \begin{align*}
        \pr{X \sim \Bin(M, 1 / |G_i|)}{X \ge m}
    &\le\pr{X \sim \Bin(M, 1 / |G_i|)}{X \ge \frac{4M\gamma}{|G^{(r)}|}} \tag{$M = m|G^{(r)}|/(4\gamma)$}\\
    &\le\pr{X \sim \Bin(M, 2\gamma / |G^{(r)}|)}{X \ge \frac{4M\gamma}{|G^{(r)}|}}. \tag{$|G_i| \ge |G^{(r)}| / (2\gamma)$}
    \end{align*}
    By a Chernoff bound, the last expression is at most $\exp\left(-\frac{2M\gamma}{3|G^{(r)}|}\right)$, which can be made smaller than $\frac{\delta}{12|G^{(r)}|}$ since $M \ge c \cdot \frac{|G^{(r)}|}{4\gamma}\ln\frac{|G^{(r)}|}{\delta}$ for sufficiently large $c$.
    By a union bound, the aforementioned condition holds for all $(i, v)$ pairs with probability at least $1 - \delta / 12$.

    Finally, the lemma follows from the two steps above and another union bound.
\end{proof}

As in the analysis in the previous section, we define a ``good event'' that implies the success of Algorithm~\ref{algo:refutable}.

\begin{definition}
    Let $\goodevent$ denote the event that the following happen simultaneously when Algorithm~\ref{algo:refutable} is executed:
    \begin{itemize}
        \item The condition in Lemma~\ref{lemma:single-iteration} holds at every iteration $r$.
        \item Whenever Line~\ref{line:refutable-test} is reached, $\err_{\D_v}(\hat g_i) \le \eps / 4$ implies $\err_{S_v}(\hat g_i) \le \eps / 2$ and $\err_{\D_v}(\hat g_i) > \eps$ implies $\err_{S_v}(\hat g_i) > \eps / 2$.
    \end{itemize}
\end{definition}

\begin{lemma}\label{lemma:refutable-good-event}
    When Algorithm~\ref{algo:refutable} is executed with some sufficiently large constant $c$, it holds that $\pr{}{\goodevent} \ge 1 - \delta$.
\end{lemma}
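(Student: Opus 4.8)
The plan is to control the two conditions in the definition of $\goodevent$ separately and union-bound over the (a priori unbounded) number of while-loop iterations. The crucial ingredient that makes such a union bound converge is the schedule $\delta^{(r)} = \delta / r^2$ built into Algorithm~\ref{algo:refutable}, since $\sum_{r \ge 1} 1 / r^2 = \pi^2 / 6$ is finite. I would allocate a failure budget of $\delta / 2$ to each of the two conditions, so that their sum is at most $\delta$.

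For the first condition, I would simply invoke Lemma~\ref{lemma:single-iteration}: provided $c$ is at least the universal constant appearing there and the parameters $m^{(r)}$ meet the stated lower bound, the conclusion of Lemma~\ref{lemma:single-iteration} holds at iteration $r$ except with probability $\delta^{(r)} / 6$. A union bound over all $r \ge 1$ then bounds the total failure probability by $\sum_{r \ge 1} \delta^{(r)} / 6 = \frac{\delta}{6}\cdot\frac{\pi^2}{6} < \delta / 2$. For the second condition I would use a multiplicative Chernoff bound. Fix an iteration $r$ and a vertex $v$ tested on Line~\ref{line:refutable-test}; then $\err_{S_v}(\hat g_i)$ is the empirical mean of $N \coloneqq c\ln(|G^{(r)}| / \delta^{(r)}) / \eps$ independent Bernoulli variables with mean $p \coloneqq \err_{\D_v}(\hat g_i)$. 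Because the thresholds $\eps/4$, $\eps/2$, $\eps$ are separated by constant multiplicative gaps, each of the two required implications fails only when the empirical mean deviates from $p$ by a constant factor, an event of probability $\exp(-\Omega(\eps N)) = (|G^{(r)}| / \delta^{(r)})^{-\Omega(c)}$. For $c$ large enough this is at most $\delta^{(r)} / (6|G^{(r)}|)$. Since the sets $G_i$ partition $G^{(r)}$, at most $|G^{(r)}|$ vertices are tested in iteration $r$, so a union bound over these tests gives failure probability at most $\delta^{(r)} / 6$ at iteration $r$, and a further union bound over $r$ again yields $\sum_{r \ge 1} \delta^{(r)} / 6 < \delta / 2$.

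Combining the two bounds, $\goodevent$ fails with probability at most $\delta$, as claimed. The one subtlety to watch is the nesting of the two union bounds in the second condition: it is essential that the per-test sample size $N$ scale with $\ln(|G^{(r)}| / \delta^{(r)})$ rather than with $\ln(1 / \delta^{(r)})$ alone, since the extra $\ln|G^{(r)}|$ term is exactly what supplies the $1 / |G^{(r)}|$ factor needed to absorb the inner union bound over the $\le |G^{(r)}|$ tests within each iteration. I do not expect a genuine obstacle here—once the $\delta / r^2$ schedule is exploited to tame the unbounded iteration count, the remainder is a routine Chernoff-and-union-bound calculation.
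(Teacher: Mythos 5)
Your proposal is correct and follows essentially the same route as the paper: invoke Lemma~\ref{lemma:single-iteration} for the first condition, apply a Chernoff bound plus a union bound over the at most $|G^{(r)}|$ tests per iteration for the second, and sum over $r$ using the $\delta^{(r)} = \delta/r^2$ schedule. The only cosmetic difference is the failure-budget split ($\delta/2 + \delta/2$ versus the paper's $\delta/3 + \delta/3$), which is immaterial.
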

\begin{proof}
    By Lemma~\ref{lemma:single-iteration}, the probability for the condition in Lemma~\ref{lemma:single-iteration} to be violated in the $r$-th iteration is at most $\delta^{(r)} / 6$. Summing over all $r$ gives $\sum_{r=1}^{+\infty}\frac{\delta^{(r)}}{6}
    =   \frac{\delta}{6}\cdot \frac{\pi^2}{6}
    <   \delta / 3$.
    By the same argument as in the proof of Lemma~\ref{lemma:same-marginal-good-event}, the probability for the second condition to be violated is also at most $\delta/3$. By a union bound, $\pr{}{\goodevent} \ge 1 - \delta/3 - \delta/3 \ge 1 - \delta$.
\end{proof}

Analogous to Lemma~\ref{lemma:same-marginal-correctness}, we have the following lemma, which states that event $\goodevent$ guarantees that the classifiers returned by the algorithm are accurate.
\begin{lemma}\label{lemma:refutable-correctness}
    When $\goodevent$ happens, the output of Algorithm~\ref{algo:refutable} satisfies $\err_{\D_i}(\hat f_i) \le \eps$ for every $i \in [n]$.
\end{lemma}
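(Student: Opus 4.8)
The plan is to derive the per-distribution accuracy guarantee directly from the second condition in the definition of $\goodevent$, using the structure of the inner test in Algorithm~\ref{algo:refutable}. The crucial structural observation is that $\hat f_v$ is assigned a value (namely some candidate classifier $\hat g_i$) \emph{only} when the test on Line~\ref{line:refutable-test} succeeds, i.e., only when $\err_{S_v}(\hat g_i) \le \eps/2$ for the freshly drawn test set $S_v$.

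First I would take the contrapositive of the second implication in $\goodevent$. That implication states that $\err_{\D_v}(\hat g_i) > \eps$ forces $\err_{S_v}(\hat g_i) > \eps/2$; contrapositively, $\err_{S_v}(\hat g_i) \le \eps/2$ guarantees $\err_{\D_v}(\hat g_i) \le \eps$. Hence every time the algorithm executes the assignment $\hat f_v \gets \hat g_i$ (which happens precisely after the successful test), the classifier it assigns is genuinely $\eps$-accurate on $\D_v$.

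Second, I would argue that every $v \in [n]$ is assigned exactly once. Whenever $\hat f_v$ is set, $v$ is simultaneously removed from $G^{(r+1)}$, and since all subsequent iterations operate only on the current remaining set (the color loop ranges over colors of $G^{(r)}$ and each $G_i \subseteq G^{(r)}$), the index $v$ is never revisited. The while-loop terminates only when $G^{(r)} = \emptyset$, so every index is eventually removed and therefore every $\hat f_v$ is defined. Combining this with the previous paragraph yields $\err_{\D_v}(\hat f_v) \le \eps$ for all $v \in [n]$.

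I do not expect any genuinely hard step here: once $\goodevent$ is in force, correctness is essentially a bookkeeping argument about when assignments occur. The only point requiring care is that the test threshold $\eps/2$ and the target population threshold $\eps$ are aligned so that the contrapositive delivers exactly the desired accuracy $\eps$. The substantive technical content---that a consistent classifier $\hat g_i$ even exists (Lemma~\ref{lemma:conflit-graph-IS}) and that it is good on average for its color class (Lemma~\ref{lemma:single-iteration})---is confined to the earlier lemmas and is not needed for this accuracy claim; those lemmas instead underpin the termination and sample-complexity analysis.
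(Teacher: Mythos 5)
Your proof is correct and matches the paper's intended argument: the paper omits an explicit proof, noting only that it is analogous to Lemma~\ref{lemma:same-marginal-correctness}, whose proof is exactly your contrapositive-of-the-test argument (the assignment $\hat f_v \gets \hat g_i$ occurs only when $\err_{S_v}(\hat g_i) \le \eps/2$, which under $\goodevent$ forces $\err_{\D_v}(\hat g_i) \le \eps$). Your bookkeeping that every $v$ is assigned exactly once before the while-loop exits is also right, with the minor caveat that termination itself rests on Lemma~\ref{lemma:refutable-efficiency}.
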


Finally, we prove that the number of active distributions, $|G^{(r)}|$, decreases at an exponential rate, so the while-loop is executed at most $O(\log n)$ times. This will be useful for upper bounding the sample complexity.

\begin{lemma}\label{lemma:refutable-efficiency}
    When event $\goodevent$ happens, $|G^{(r+1)}| \le \frac{3}{4} |G^{(r)}|$ holds at the end of the $r$-th iteration of the while-loop.
\end{lemma}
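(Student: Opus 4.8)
The goal is to show that in each iteration of the while-loop, at least a $1/4$ fraction of the active distributions $G^{(r)}$ receive a classifier (and are removed from the active set), so that $|G^{(r+1)}| \le \frac{3}{4}|G^{(r)}|$. The plan is to track where the vertices of $G^{(r)}$ go: they are partitioned by the $\gamma^{(r)}$-coloring into color classes $G_1, \ldots, G_{\gamma^{(r)}}$, and the algorithm only processes colors that are ``heavy'' in the sense that $|G_i| \ge |G^{(r)}|/(2\gamma^{(r)})$. First I would argue that the heavy colors cover a large fraction of the vertices, and then that within each heavy color, a constant fraction of the vertices actually get removed.

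\textbf{Step 1: Heavy colors cover most vertices.} The light colors (those with $|G_i| < |G^{(r)}|/(2\gamma^{(r)})$) are at most $\gamma^{(r)}$ in number, so together they contain fewer than $\gamma^{(r)} \cdot \frac{|G^{(r)}|}{2\gamma^{(r)}} = \frac{1}{2}|G^{(r)}|$ vertices. Hence the heavy colors collectively contain more than $\frac{1}{2}|G^{(r)}|$ vertices. This is the purely combinatorial part, and it is immediate.

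\textbf{Step 2: Within a heavy color, a constant fraction is removed.} Fix a heavy color $i$. Under $\goodevent$, Lemma~\ref{lemma:single-iteration} gives $\frac{1}{|G_i|}\sum_{v \in G_i}\err_{\D_v}(\hat g_i) \le \eps/8$. By Markov's inequality applied to the nonnegative quantities $\err_{\D_v}(\hat g_i)$, the fraction of $v \in G_i$ with $\err_{\D_v}(\hat g_i) > \eps/4$ is at most $(\eps/8)/(\eps/4) = 1/2$. So at least half of the vertices in $G_i$ satisfy $\err_{\D_v}(\hat g_i) \le \eps/4$. For every such $v$, the second condition of $\goodevent$ (invoked on Line~\ref{line:refutable-test}) guarantees $\err_{S_v}(\hat g_i) \le \eps/2$, so the test passes, $\hat f_v$ is set to $\hat g_i$, and $v$ is removed from $G^{(r+1)}$. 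Thus at least $\frac{1}{2}|G_i|$ vertices of each heavy color are removed.

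\textbf{Combining and the main obstacle.} Combining the two steps, the number of removed vertices is at least $\frac{1}{2}\sum_{i \text{ heavy}}|G_i| > \frac{1}{2}\cdot\frac{1}{2}|G^{(r)}| = \frac{1}{4}|G^{(r)}|$, which gives $|G^{(r+1)}| \le \frac{3}{4}|G^{(r)}|$. The main subtlety I anticipate is bookkeeping the two constant factors correctly: the $1/(2\gamma^{(r)})$ heaviness threshold is precisely calibrated so that light colors lose at most half the mass, and the $\eps/8$ average error in Lemma~\ref{lemma:single-iteration} is exactly what Markov needs to turn into an $\eps/4$ bound for half the vertices — which in turn is the threshold that the $\goodevent$ test condition on Line~\ref{line:refutable-test} is set up to certify. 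No single step is hard, but the chain of constants ($\eps/8 \to \eps/4 \to \eps/2$ and $1/2 \cdot 1/2 = 1/4$) must line up, and I would double-check that the heaviness filter on the for-loop over colors is the same threshold used in Step~1.
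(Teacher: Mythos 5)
Your proposal is correct and follows essentially the same route as the paper's proof: bound the mass of the light color classes by $|G^{(r)}|/2$, apply Markov's inequality to the average-error guarantee of Lemma~\ref{lemma:single-iteration} to get at least half of each heavy class below $\eps/4$, and invoke the second condition of $\goodevent$ to conclude those vertices are removed. The chain of constants lines up exactly as in the paper, so there is nothing to add.
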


\begin{proof}
    Consider the $r$-th iteration of the while-loop. For brevity, we drop the superscript in $\gamma^{(r)}$. Since $\sum_{i=1}^{\gamma}|G_i| = |G^{(r)}|$, we have
    \begin{align*}
        &~\sum_{i=1}^{\gamma}|G_i|\cdot\1{|G_i| \ge |G^{(r)}| / (2\gamma)}\\
    =   &~\sum_{i=1}^{\gamma}|G_i| - \sum_{i=1}^{\gamma}|G_i|\cdot\1{|G_i| < |G^{(r)}| / (2\gamma)}\\
    \ge &~ |G^{(r)}| - \gamma\cdot\frac{|G^{(r)}|}{2\gamma}
    =   |G^{(r)}| / 2.
    \end{align*}
    
    Fix $i \in [\gamma]$ that satisfies $|G_i| \ge |G^{(r)}| / (2\gamma)$. By the first condition in the definition of $\goodevent$, event $\goodevent$ implies that
    \[
        \frac{1}{|G_i|}\sum_{v \in G_i}\err_{\D_v}(\hat g_i) \le \eps / 8.
    \]
    By Markov's inequality, there are at least $|G_i| / 2$ elements $v \in G_i$ such that $\err_{\D_v}(\hat g_i) \le \eps / 4$. Then, by the second condition in the definition of $\goodevent$, every such element $v$ will not appear in $G^{(r + 1)}$. Therefore, we conclude that
    \[
        |G^{(r+1)}|
    \le |G^{(r)}| - \sum_{i=1}^{\gamma}\frac{|G_i|}{2}\cdot\1{|G_i| \ge |G^{(r)}| / (2\gamma)}
    \le |G^{(r)}| - \frac{|G^{(r)}|}{4}
    =   \frac{3}{4}|G^{(r)}|.
    \]
\end{proof}

\subsection{The Bipartite Case}
We start with the simpler case that $k = 2$. In this case, we set $m^{(r)}$ according to Lemma~\ref{lemma:single-iteration} and set $\gamma^{(r)} = 2$ in Algorithm~\ref{algo:refutable}. Furthermore, the coloring algorithm is simply the efficient algorithm for $2$-coloring.

\begin{proof}[Proof of Theorem~\ref{thm:refutable}, the $k = 2$ case]
    In light of Lemmas \ref{lemma:refutable-good-event}~and~\ref{lemma:refutable-correctness}, it remains to upper bound the sample complexity of Algorithm~\ref{algo:refutable} under event $\goodevent$. As a simple corollary of Lemma~\ref{lemma:refutable-efficiency}, we have $|G^{(r)}| \le (3/4)^{r-1}\cdot n$ at the $r$-th iteration of the while-loop.

    The sample complexity of the $r$-th iteration of the while-loop is upper bounded by
    \begin{align*}
        &~m^{(r)}\cdot|G^{(r)}| + |G^{(r)}|\cdot c\cdot \frac{\ln(|G^{(r)}| / \delta^{(r)})}{\eps}\\
    \preceq
        &~\frac{d\log(1/\eps) + |G^{(r)}| + \log(1/\delta^{(r)})}{\eps} + |G^{(r)}|\cdot \frac{\log(|G^{(r)}| / \delta^{(r)})}{\eps}\\
    \preceq
        &~ \frac{d\log(1/\eps) + (3/4)^r\cdot n + \log(1/\delta) + \log r}{\eps} + (3/4)^r\cdot n\cdot\frac{\log[(3/4)^r\cdot n] + \log(1/\delta) + \log r}{\eps}.
    \end{align*}
    Since the while-loop terminates when $G^{(r)}$ is empty, there are at most $O(\log n)$ iterations, and summing the above over all rounds gives
    \begin{align*}
        &~\frac{d\log(1/\eps)\log n + n + \log(1/\delta)\log n + \log^2n}{\eps} + \frac{n\log n + n \log (1/\delta)}{\eps}\\
    \preceq &~\frac{d\log(1/\eps) + n}{\eps}\cdot \log n + \frac{n \log(1/\delta)}{\eps}.
    \end{align*}
    Therefore, we have the desired sample complexity upper bound.
\end{proof}

\subsection{The General Case}
When $k \ge 3$, we can no longer find a $k$-coloring efficiently. Instead, we compute an approximate coloring with $O(n^{c^*_k})$ colors, where $n = |G^{(r)}|$ is the number of vertices in the graph. The hope is that as long as $c^*_k < 1$, we can still combine the datasets from vertices that share the same color, and use the data more efficiently.

Formally, let $\alpha$ be a constant such that there is an efficient algorithm that colors every $k$-colorable graph with $n$ vertices using at most $\alpha \cdot n^{c^*_k}$ colors. We set $\gamma^{(r)} = \alpha \cdot |G^{(r)}|^{c^*_k}$ and set $m^{(r)}$ according to Lemma~\ref{lemma:single-iteration}.

\begin{proof}[Proof of Theorem~\ref{thm:refutable}, the $k \ge 3$ case]
    Again, we focus on upper bounding the sample complexity. The number of samples drawn in the $r$-th round is at most
    \begin{align*}
        &~m^{(r)}\cdot|G^{(r)}| + |G^{(r)}|\cdot c\cdot \frac{\ln(|G^{(r)}| / \delta^{(r)})}{\eps}\\
    \preceq
        &~|G^{(r)}|^{c^*_k}\cdot\frac{d\log(1/\eps) + |G^{(r)}| + \log(1/\delta^{(r)})}{\eps} + |G^{(r)}|\cdot \frac{\log(|G^{(r)}| / \delta^{(r)})}{\eps}.
    \end{align*}
    Plugging $|G^{(r)}| \le (3/4)^{r-1}\cdot n$ into the above and summing over $r = 1, 2, \ldots$ gives
    \begin{align*}
        &~\frac{dn^{c^*_k}\log(1/\eps) + n^{1+c^*_k} + n^{c^*_k}\log(1/\delta)}{\eps} + \frac{n\log n + n \log(1/\delta)}{\eps}\\
    \preceq
        &~\frac{d\log(1/\eps) + n}{\eps}\cdot n^{c^*_k} + \frac{n\log(1/\delta)}{\eps}. \qedhere
    \end{align*}
\end{proof}

\bibliographystyle{alpha}
\bibliography{references}

\newpage
\appendix

\section{A Sample-Efficient Learning Algorithm}\label{sec:upper-proofs}
The algorithm is formally defined in Algorithm~\ref{algo:general}, and follows the same strategy as Algorithm~1 of~\cite{BHPQ17}.

\begin{algorithm}[ht]
    \caption{Collaborative Learning for $(k, \eps)$-Realizable Distributions}
    \label{algo:general}
    \begin{algorithmic}[1] 
    \STATE \textbf{Input:} Hypothesis class $\F$. Sample access to $\D_1$, $\ldots$, $\D_n$. Parameters $k, \eps, \delta, c$.
    \STATE \textbf{Output:} Hypotheses $\hat f_1, \hat f_2, \ldots, \hat f_n$.
    \STATE $r \gets 1$; $G^{(1)} \gets [n]$;
    \WHILE{$|G^{(r)}| > k$}
        \STATE $\delta^{(r)} \gets \delta / r^2$;
        \STATE $d^{(r)} \gets c \cdot (kd + |G^{(r)}|\log k)$;
        \STATE Draw $c\cdot\frac{d^{(r)}\ln(1/\eps) + \ln(1/\delta^{(r)})}{\eps}$ samples from $\overline{D} \coloneqq \frac{1}{|G^{(r)}|}\sum_{i \in G^{(r)}}\D'_i$ to form $S$; \alglinelabel{line:general-mixture}
        \STATE $g_{f, c} \gets \argmin_{g \in \F_{G^{(r)},k}}\err_{S}(g)$; \alglinelabel{line:general-ERM}
        \STATE $G^{(r+1)} \gets \emptyset$;
        \FOR{$i \in G^{(r)}$}
            \STATE Draw $c \cdot \frac{\ln(|G^{(r)}|/\delta^{(r)})}{\eps}$ samples from $\D_i$ to form $S_i$; \alglinelabel{line:general-test}
            \IF{$\err_{S_i}(f_{c_i}) \le 6\eps$}
                \STATE $\hat f_i \gets f_{c_i}$;
            \ELSE
                \STATE $G^{(r+1)} \gets G^{(r+1)} \cup \{i\}$;
            \ENDIF
        \ENDFOR
        \STATE $r \gets r + 1$;
    \ENDWHILE

    \FOR{$i \in G^{(r)}$}
        \STATE Draw $c \cdot \frac{d\ln(1/\eps) + \ln(k/\delta)}{\eps}$ samples from $\D_i$ to form $S_i$;
        \STATE $\hat f_i \gets \argmin_{f \in \F}\err_{S_i}(f)$; \alglinelabel{line:general-naive-ERM}
    \ENDFOR

    \STATE \textbf{Return:} $\hat f_1, \ldots, \hat f_n$;
    \end{algorithmic}
\end{algorithm}

Recall that for each data distribution $\D_i$, $\D'_i$ denotes the distribution of $((i, x), y)$ when $(x, y) \sim \D_i$. Therefore, on Line~\ref{line:general-mixture}, to sample from the mixture distribution $\frac{1}{|G^{(r)}|}\sum_{i \in G^{(r)}}\D'_i$, it suffices to sample $i$ from $G^{(r)}$ uniformly at random, draw $(x, y) \sim \D_i$, and then use the labeled example $((i, x), y)$.

The algorithm maintains $G^{(r)}$ as the set of active distributions at the beginning of the $r$-th round. The algorithm samples from the uniform mixture of the active distributions, learns a classifier $g_{f, c} \in \F_{G^{(r)},k}$ via ERM, and then tests whether the learned classifier is good enough for each distribution in $G^{(r)}$. If the learned classifier achieves an $O(\eps)$ empirical error on $\D_i$, we use it as the answer $\hat f_i$; otherwise, $\D_i$ stays active for the next round. Finally, the iteration terminates whenever the number of active distributions drops below $k$, at which point we na\"ively learn on the $\le k$ remaining distributions separately.

The analysis of Algorithm~\ref{algo:general} is straightforward, and relies on the following definition of a ``good event''.

\begin{definition}
    Let $\goodevent$ denote the event that the following happen simultaneously when Algorithm~\ref{algo:general} is executed:
    \begin{itemize}
        \item Whenever Line~\ref{line:general-ERM} is reached, it holds that $\err_{\overline{\D}}(g_{f, c}) \le 2\eps$.
        \item Whenever Line~\ref{line:general-test} is reached, it holds that: (1)  $\err_{\D_i}(f_{c_i}) \le 4\eps$ implies $\err_{S_i}(f_{c_i}) \le 6\eps$; (2) $\err_{\D_i}(f_{c_i}) > 8\eps$ implies $\err_{S_i}(f_{c_i}) > 6\eps$.
        \item Whenever Line~\ref{line:general-naive-ERM} is reached, it holds that $\err_{\D_i}(\hat f_i) \le 2\eps$.
    \end{itemize}
\end{definition}

Then, Theorem~\ref{thm:sample-upper-general} is a consequence of the following three lemmas.

\begin{lemma}\label{lemma:general-good-event}
    For some universal constant $c$, when Algorithm~\ref{algo:general} is executed with parameter $c$,
    $\pr{}{\goodevent} \ge 1 - \delta$.
\end{lemma}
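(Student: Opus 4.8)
The plan is to bound the failure probability of each of the three conditions defining $\goodevent$ separately by $\delta/3$, and then conclude by a union bound. Throughout, I would deal with the data-dependence of the active set $G^{(r)}$, the learned classifiers, and which lines of Algorithm~\ref{algo:general} actually get executed by conditioning on the entire history up to the moment just before each fresh batch of samples is drawn. Since every estimate is computed from freshly drawn, conditionally independent samples, the relevant hypotheses become deterministic once we condition, so the standard concentration inequalities apply conditionally (and hence unconditionally after averaging over the history). The schedule $\delta^{(r)} = \delta/r^2$ is what lets me union bound over all rounds $r \ge 1$, using $\sum_{r \ge 1} r^{-2} = \pi^2/6$.

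For the first condition, I would invoke $(k,\eps)$-realizability: restricting the witnesses $f^*_1, \ldots, f^*_k$ and $c^*$ to the active indices $G^{(r)}$ produces a classifier $g^* \in \F_{G^{(r)},k}$ with $\err_{\D'_i}(g^*) \le \eps$ for every $i \in G^{(r)}$, hence $\err_{\overline{\D}}(g^*) \le \eps$ on the mixture $\overline{\D}$. By Lemma~\ref{lemma:vc-dim-bound}, the VC dimension of $\F_{G^{(r)},k}$ is $O(kd + |G^{(r)}|\log k) \le d^{(r)}$. Because the $r$-th round draws $c\cdot\frac{d^{(r)}\ln(1/\eps)+\ln(1/\delta^{(r)})}{\eps}$ samples from $\overline{\D}$, the nearly-realizable PAC bound (Theorem~5.7 of~\cite{AB99}, exactly as used in Lemma~\ref{lemma:same-marginal-good-event}) guarantees, for a large enough universal constant $c$, that the ERM output $g_{f,c}$ satisfies $\err_{\overline{\D}}(g_{f,c}) \le 2\eps$ except with probability $\delta^{(r)}/6$. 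Summing over $r$ gives $\frac{\delta}{6}\cdot\frac{\pi^2}{6} < \delta/3$.

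For the second condition, I would fix a round $r$ and an index $i \in G^{(r)}$, and condition on the history determining $g_{f,c}$, so that $f_{c_i}$ is a fixed classifier. The test set $S_i$ then consists of $c\cdot\frac{\ln(|G^{(r)}|/\delta^{(r)})}{\eps}$ fresh independent samples, so $\err_{S_i}(f_{c_i})$ is an average of that many independent Bernoullis with mean $\err_{\D_i}(f_{c_i})$. Since only $O(\log(\cdot)/\eps)$ samples are available, the key is to use a \emph{multiplicative} Chernoff bound: the threshold $6\eps$ sits a constant multiplicative gap away from both $4\eps$ (a factor $3/2$) and $8\eps$ (a factor $3/4$), so for large enough $c$ both implications hold except with probability $\delta^{(r)}/(6|G^{(r)}|)$. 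A union bound over $i \in G^{(r)}$ gives $\delta^{(r)}/6$ per round, and summing over $r$ contributes at most $\delta/3$, as in Lemma~\ref{lemma:same-marginal-good-event}.

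For the third condition, the final loop is entered at most once, on at most $k$ distributions; for each such $i$, realizability supplies some $f^*_{c^*_i} \in \F$ with $\err_{\D_i}(f^*_{c^*_i}) \le \eps$, and the $c\cdot\frac{d\ln(1/\eps)+\ln(k/\delta)}{\eps}$ fresh samples together with the same nearly-realizable PAC bound (now with VC dimension $d$) yield $\err_{\D_i}(\hat f_i) \le 2\eps$ except with probability $\delta/(3k)$; a union bound over the $\le k$ distributions contributes $\delta/3$. Combining the three pieces gives $\pr{}{\goodevent} \ge 1 - \delta$. The main obstacle is the adaptivity bookkeeping rather than any single inequality: I must verify that conditioning on the history freezes each tested or learned hypothesis while keeping the subsequently drawn samples independent, and that the per-round failure budgets $\delta^{(r)}/6$ genuinely sum to a constant fraction of $\delta$ over the a priori unbounded number of rounds (which is exactly what the $\delta/r^2$ schedule secures).
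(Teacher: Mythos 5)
Your proposal is correct and follows essentially the same route as the paper's proof: the same three-way split with a $\delta/3$ budget per condition, the same use of Lemma~\ref{lemma:vc-dim-bound} plus the nearly-realizable PAC bound (Theorem~5.7 of~\cite{AB99}) for the first and third conditions, the same multiplicative Chernoff bound with per-pair budget $\delta^{(r)}/(6|G^{(r)}|)$ for the second, and the same $\delta/r^2$ summation over rounds. The extra care you take with the realizability witness on the mixture $\overline{\D}$ and with conditioning on the history to freeze the tested hypotheses is left implicit in the paper but is exactly the right bookkeeping.
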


\begin{proof}
    Whenever Line~\ref{line:general-ERM} is reached, by Lemma~\ref{lemma:vc-dim-bound}, for some sufficiently large constant $c > 0$, the VC dimension of $\F_{G^{(r)}, k}$ is upper bounded by $d^{(r)} = c\cdot(kd + |G^{(r)}|\log k)$. Then, it follows from Theorem~5.7 of~\cite{AB99} that, for some universal constant $c > 0$, the first condition is violated at the $r$-th round with probability at most $\delta^{(r)}/6$. By a union bound over all possible $r$, the first condition holds with probability at least $1 - \sum_{r=1}^{+\infty}\delta^{(r)}/6 \ge 1 - \delta / 3$.
    
    Again, by Theorem~5.7 of~\cite{AB99}, the probability for the third condition to be violated for a specific $i$ is at most $\delta / (3k)$. Since $|G^{(r)}| \le k$, by a union bound, the third condition holds with probability at least $1 - k \cdot \frac{\delta}{3k} = 1 - \delta / 3$.

    Finally, a Chernoff bound shows that the second condition holds for a specific $r$ and $i \in G^{(r)}$ with probability at least
    \[
        1 - 2\exp\left(-\Omega\left(c \cdot \frac{\ln(|G^{(r)}| / \delta^{(r)})}{\eps} \cdot \eps\right)\right),
    \]
    which can be made greater than $1 - \frac{\delta^{(r)}}{6|G^{(r)}|}$ for sufficiently large $c$. By a union bound over all $r$ and $i \in G^{(r)}$, the second condition holds with probability at least
    \[
        1 - \sum_{r=1}^{+\infty}|G^{(r)}|\cdot\frac{\delta^{(r)}}{6|G^{(r)}|}
        \ge 1 - \delta / 3.
    \]
    The lemma follows from the three claims above and yet another union bound.
\end{proof}

\begin{lemma}\label{lemma:general-correctness}
    When event $\goodevent$ happens, the output of Algorithm~\ref{algo:general} satisfies $\err_{\D_i}(\hat f_i) \le 8\eps$ for every $i \in [n]$.
\end{lemma}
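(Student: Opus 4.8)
The plan is to fix an arbitrary $i \in [n]$ and trace how its output classifier $\hat f_i$ gets assigned, then verify accuracy in each case using the definition of $\goodevent$. The key structural observation is that there are exactly two sites at which $\hat f_i$ can be set: Line~\ref{line:general-test} inside the while-loop (where $\hat f_i \gets f_{c_i}$ whenever the test $\err_{S_i}(f_{c_i}) \le 6\eps$ succeeds) and Line~\ref{line:general-naive-ERM} in the final for-loop (where $\hat f_i$ is the naive ERM solution on a fresh sample from $\D_i$). Every index is assigned exactly once: either it passes the test during some iteration and is dropped from $G^{(r+1)}$, or it survives every iteration and is caught by the final for-loop once $|G^{(r)}| \le k$. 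That every index is \emph{eventually} assigned relies on the while-loop terminating; under $\goodevent$ this follows from the first bullet, since $\err_{\overline{\D}}(g_{f,c}) = \frac{1}{|G^{(r)}|}\sum_{i \in G^{(r)}}\err_{\D_i}(f_{c_i}) \le 2\eps$ (via the augmentation identity $\err_{\D'_i}(g_{f,c}) = \err_{\D_i}(f_{c_i})$), so by Markov at least half the active indices have $\err_{\D_i}(f_{c_i}) \le 4\eps$ and hence pass the test by bullet~(2), part~(1); thus $|G^{(r)}|$ shrinks geometrically. I would state this termination fact briefly here and defer its quantitative use to the efficiency lemma.

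For the first case, where $\hat f_i = f_{c_i}$ is set on Line~\ref{line:general-test}, I would invoke the contrapositive of part~(2) of the second bullet of $\goodevent$: since $\err_{\D_i}(f_{c_i}) > 8\eps$ would force $\err_{S_i}(f_{c_i}) > 6\eps$, the observed inequality $\err_{S_i}(f_{c_i}) \le 6\eps$ (which is precisely the condition triggering this assignment) implies $\err_{\D_i}(f_{c_i}) \le 8\eps$, and therefore $\err_{\D_i}(\hat f_i) \le 8\eps$. For the second case, where $\hat f_i$ is assigned on Line~\ref{line:general-naive-ERM}, the third bullet of $\goodevent$ directly yields $\err_{\D_i}(\hat f_i) \le 2\eps \le 8\eps$. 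Combining the two cases over all $i \in [n]$ gives the claim.

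There is no real analytic obstacle in this lemma: the only bookkeeping is to confirm that the two assignment sites are exhaustive and mutually exclusive, and that the threshold gap built into $\goodevent$ (the $4\eps$/$8\eps$ population thresholds straddling the $6\eps$ empirical test threshold) makes the two one-line implications above go through. All of the genuine quantitative work — the Sauer--Shelah/uniform-convergence bound via Lemma~\ref{lemma:vc-dim-bound} and the Chernoff concentration that make $\goodevent$ hold with probability $\ge 1-\delta$, as well as the geometric shrinkage of $|G^{(r)}|$ — lives in the good-event and efficiency lemmas, not here. So the plan is simply to present the clean case analysis and let $\goodevent$ do the heavy lifting.
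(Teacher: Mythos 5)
Your proposal is correct and matches the paper's own (very brief) proof: both argue by the same case analysis over the two assignment sites, using the contrapositive of the second bullet of $\goodevent$ for assignments after Line~\ref{line:general-test} and the third bullet for Line~\ref{line:general-naive-ERM}. The extra remarks on termination and the exhaustiveness of the two cases are fine but not needed beyond what the paper already defers to the efficiency lemma.
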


\begin{proof}
    We assign a classifier as $\hat f_i$ for some $i \in [n]$ either right after Line~\ref{line:general-test} or on Line~\ref{line:general-naive-ERM}. In either case, event $\goodevent$ guarantees that $\hat f_i$ is $8\eps$-accurate on $\D_i$.
\end{proof}

\begin{lemma}\label{lemma:general-efficiency}
    When event $\goodevent$ happens, Algorithm~\ref{algo:general} terminates with a sample complexity of
    \[
        O\left(\frac{kd\log(n/k)\log(1/\eps)}{\eps} + \frac{n\log k\log(1/\eps) + n\log(n/\delta)}{\eps}\right).
    \]
\end{lemma}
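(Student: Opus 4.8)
The plan is to establish three facts in sequence: (i) under $\goodevent$ the active set $G^{(r)}$ shrinks by at least a factor of two per iteration, (ii) consequently the while-loop runs only $O(\log(n/k))$ times, and (iii) summing the per-iteration sample costs against this geometric decay yields the stated bound. We may assume $n > k$, since otherwise the while-loop is never entered and only the final naive block runs, learning $\le k$ distributions at $O((d\log(1/\eps) + \log(k/\delta))/\eps)$ samples each, which is already within the claimed bound.

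First I would prove the halving step. The crucial input is the first condition of $\goodevent$, which guarantees $\err_{\overline{\D}}(g_{f,c}) \le 2\eps$ whenever Line~\ref{line:general-ERM} is reached. Since $\overline{\D} = \frac{1}{|G^{(r)}|}\sum_{i \in G^{(r)}}\D'_i$ and, by the observation following Definition~\ref{def:augmented-class}, $\err_{\D'_i}(g_{f,c}) = \err_{\D_i}(f_{c_i})$, this bound says exactly that the average of $\err_{\D_i}(f_{c_i})$ over $i \in G^{(r)}$ is at most $2\eps$. By Markov's inequality, at most $|G^{(r)}|/2$ indices $i$ can have $\err_{\D_i}(f_{c_i}) > 4\eps$. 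For every remaining index, with $\err_{\D_i}(f_{c_i}) \le 4\eps$, part (1) of the second condition of $\goodevent$ forces $\err_{S_i}(f_{c_i}) \le 6\eps$ on Line~\ref{line:general-test}, so $\hat f_i$ is assigned and $i$ does not survive into $G^{(r+1)}$. Hence $|G^{(r+1)}| \le |G^{(r)}|/2$. Iterating from $|G^{(1)}| = n$ gives $|G^{(r)}| \le n/2^{r-1}$, so as soon as $r - 1 \ge \log_2(n/k)$ we have $|G^{(r)}| \le k$ and the loop exits; thus the iteration count is $R = O(\log(n/k))$.

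Next I would add up the samples from the three sources. For the mixture draws on Line~\ref{line:general-mixture}, substituting $d^{(r)} = c(kd + |G^{(r)}|\log k)$ and $\delta^{(r)} = \delta/r^2$, iteration $r$ costs $O((kd + |G^{(r)}|\log k)\log(1/\eps)/\eps + \log(r/\delta)/\eps)$; the $kd$ part summed over $R = O(\log(n/k))$ iterations produces the leading term $\frac{kd\log(n/k)\log(1/\eps)}{\eps}$, the $|G^{(r)}|\log k$ part telescopes via $\sum_r |G^{(r)}| \le 2n$ into $\frac{n\log k\log(1/\eps)}{\eps}$, and the $\log(r/\delta)$ part is dominated by $\frac{n\log(n/\delta)}{\eps}$. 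For the per-distribution tests on Line~\ref{line:general-test}, iteration $r$ costs $|G^{(r)}|\cdot O(\log(|G^{(r)}|/\delta^{(r)})/\eps)$; bounding $\log(|G^{(r)}|/\delta^{(r)}) = O(\log(n/\delta))$ and using $\sum_r |G^{(r)}| \le 2n$ gives $O(n\log(n/\delta)/\eps)$. Finally the post-loop block on Line~\ref{line:general-naive-ERM} learns $\le k$ distributions at $O((d\log(1/\eps) + \log(k/\delta))/\eps)$ each, contributing $O(kd\log(1/\eps)/\eps + k\log(k/\delta)/\eps)$, both dominated by terms already collected. Combining everything yields the claimed complexity.

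The one substantive step is the halving argument, which couples Markov's inequality with the two-sided empirical-to-population translation built into $\goodevent$; everything afterward is routine geometric summation. The only point needing minor care is that the $\delta^{(r)} = \delta/r^2$ schedule introduces $\log r$ and $\log\log n$ factors, but these are absorbed into the $\log(n/\delta)$ terms precisely because $R = O(\log(n/k)) = O(\log n)$ is so small relative to the $n$ prefactor.
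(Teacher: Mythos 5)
Your proposal is correct and follows essentially the same route as the paper's proof: the halving of $|G^{(r)}|$ via Markov's inequality applied to $\err_{\overline{\D}}(g_{f,c}) \le 2\eps$ combined with the second condition of $\goodevent$, followed by summing the per-iteration costs over $O(\log(n/k))$ rounds using the geometric decay of $|G^{(r)}|$, with the final naive block dominated by the collected terms. The bookkeeping of the $\delta^{(r)} = \delta/r^2$ schedule and the splitting of the mixture-draw cost into its $kd$, $|G^{(r)}|\log k$, and confidence parts also matches the paper.
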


\begin{proof}
    We first control the size of $G^{(r)}$ in each round $r$. We claim that when event $\goodevent$ happens, if $G^{(r+1)}$ is defined during the execution of Algorithm~\ref{algo:general}, it holds that $|G^{(r+1)}| \le |G^{(r)}| / 2$. Indeed, the first condition of $\goodevent$ guarantees that for $\overline{\D} = \frac{1}{|G^{(r)}|}\sum_{i \in G^{(r)}}\D'_i$,
    \[
        \frac{1}{|G^{(r)}|}\sum_{i \in G^{(r)}}\err_{\D_i}\left(f_{c_i}\right)
        =   \frac{1}{|G^{(r)}|}\sum_{i \in G^{(r)}}\err_{\D'_i}\left(g_{f, c}\right) =   \err_{\overline{\D}}\left(g_{f,c}\right)
    \le 2\eps.
    \]
    By Markov's inequality, it holds for at least half of the values $i \in G^{(r)}$ that $\err_{\D_i}\left(f_{c_i}\right) \le 4\eps$. Then, the second condition of $\goodevent$ guarantees that $|G^{(r+1)}| \le |G^{(r)}| / 2$. It follows immediately that $|G^{(r)}| \le 2^{1-r}\cdot n$.

    Then, the sample complexity of the $r$-th iteration of the while-loop is upper bounded by
    \begin{align*}
        &~c\cdot\frac{d^{(r)}\ln(1/\eps) + \ln(1/\delta^{(r)})}{\eps} + |G^{(r)}|\cdot c\cdot \frac{\ln(|G^{(r)}| / \delta^{(r)})}{\eps}\\
    \preceq &~ \frac{(kd + 2^{-r}\cdot n\log k)\log(1/\eps) + \log(1/\delta) + \log r}{\eps} + 2^{-r}\cdot n\cdot\frac{\log(2^{-r}\cdot n) + \log(1/\delta) + \log r}{\eps}.
    \end{align*}
    Since the while-loop terminates whenever $|G^{(r)}| \le k$, there are at most $O(\log(n/k))$ iterations, and summing the above over $r = 1, 2, \ldots, O(\log(n/k))$ gives
    \begin{align*}
        &~\frac{kd\log(1/\eps) + \log(1/\delta)}{\eps}\cdot\log(n/k) + \frac{n\log k\log(1/\eps)}{\eps} + \frac{\log^2(n/k)}{\eps} + \frac{n\log n + n\log (1/\delta)}{\eps}\\
    \preceq &~\frac{kd\log(n/k)\log(1/\eps)}{\eps} + \frac{n\log k \log(1/\eps) + n\log(n/\delta)}{\eps}.
    \end{align*}
    Finally, the last for-loop of the algorithm takes $O\left(\frac{kd\log(1/\eps) + k\log(k/\delta)}{\eps}\right)$ samples in total, which is always dominated by the above. Therefore, we have the desired upper bound on the sample complexity.
\end{proof}

Finally, we put all the pieces together and prove Theorem~\ref{thm:sample-upper-general}.

\begin{proof}[Proof of Theorem~\ref{thm:sample-upper-general}]
    Lemmas \ref{lemma:general-good-event}, \ref{lemma:general-correctness}, and \ref{lemma:general-efficiency} together imply that, conditioning on an event that happens with probability at least $1 - \delta$, Algorithm~\ref{algo:general} returns $8\eps$-accurate classifiers for each of the $n$ distributions, while the number of samples is upper bounded by some
    \[
        M = O\left(\frac{kd\log(n/k)\log(1/\eps)}{\eps} + \frac{n\log k\log(1/\eps) + n\log(n/\delta)}{\eps}\right).
    \]
    To control the sample complexity---the (unconditional) expectation of the number of samples, we simply terminate the algorithm whenever the number of samples exceeds $M$. The resulting algorithm is still $(8\eps, \delta)$-PAC guarantee, and satisfies the desired upper bound on the sample complexity.
\end{proof}

\end{document}